\documentclass[3p]{elsarticle}

\usepackage{lipsum}
\makeatletter
\def\ps@pprintTitle{%
 \let\@oddhead\@empty
 \let\@evenhead\@empty
 \def\@oddfoot{}%
 \let\@evenfoot\@oddfoot}
\makeatother

\usepackage{Definitions}
\usepackage{subcaption}
\usepackage{mathtools}
\usepackage{bm}
\usepackage{amsmath}
\usepackage{amssymb}
\usepackage{graphicx}
\usepackage{slashbox}
\usepackage{makecell}
\usepackage{algpseudocode}
\usepackage{algorithm}
\usepackage{setspace}

\newcommand{\D}{\mathcal{D}}
\newcommand{\E}{\mathbb{E}}
\newcommand{\R}{\mathbb{R}}
\newcommand{\linearadv}{\textit{Interpolated Adversarial Training }}
\newcommand{\Req}{\textbf{Require:}\hspace*{0.5em}}
\newcommand{\Xhh}{\hspace*{3mm}}
\newcommand{\XXhh}{\Xhh\Xhh}

\newcommand{\tbhh}{\hspace*{55mm}}

\def\tx{\tilde{x}}
\def\ty{\tilde{y}}

\def\tD{\tilde{\cD}}

\newcommand{\bE}{\mathbb{E}}

\renewcommand{\cal}{\mathcal}

\newcommand{\cD}{{\cal D}}

\DeclareMathOperator{\mix}{{mix}}
\newcommand{\hx}{{\hat x}}

\newcommand{\hdelta}{{\hat \delta}}
\newcommand{\cx}{{\check x}}

\newcommand{\tlambda}{{\tilde \lambda}}

\makeatletter
\newcommand{\printfnsymbol}[1]{%
  \textsuperscript{\@fnsymbol{#1}}%
}
\makeatother

\usepackage{todonotes}

\usepackage{lineno,hyperref}

\hypersetup{
    colorlinks=true
}

\modulolinenumbers[5]








\bibliographystyle{model5-names}\biboptions{authoryear}


\usepackage{color}
\usepackage[normalem]{ulem}

\begin{document}

\begin{frontmatter}

\title{Interpolated Adversarial Training: Achieving Robust Neural Networks without Sacrificing Too Much Accuracy}



\author{Alex Lamb$^{a,*}$, Vikas Verma$^{a,b,*}$, Kenji Kawaguchi$^c$, Juho Kannala$^b$, Yoshua Bengio$^a$}
\cortext[mycorrespondingauthor]{Equal contribution \\ \textit{Email addresses:} \texttt{lambalex@iro.umontreal.ca} (Alex Lamb), \texttt{vikas.verma@aalto.fi} (Vikas Verma), \texttt{kkawaguchi@fas.harvard.edu} (Kenji Kawaguchi), \texttt{juho.kaanala@aalto.fi} (Juho Kannala), \texttt{yoshua.bengio@mila.quebec} (Yoshua Bengio)}

\address{$^a$Montreal Institute for Learning Algorithms (MILA), Canada
\linebreak $^b$Aalto University, Finland
\linebreak $^c$Harvard University, USA \vspace{-15pt}}

\begin{abstract}
Adversarial robustness has become a central goal in deep learning, both in the theory and the practice.  However, successful methods to improve the adversarial robustness (such as adversarial training) greatly hurt generalization performance on the unperturbed data. 
This could have a major impact on how  the adversarial robustness affects real world systems (i.e. many may opt to forego robustness if it can improve accuracy on the unperturbed data).  We propose Interpolated Adversarial Training, which employs recently proposed interpolation based training methods in the framework of adversarial training.  On CIFAR-10, adversarial training increases the standard test error (when there is no adversary) from 4.43\% to 12.32\%, whereas with our Interpolated adversarial training we retain the adversarial robustness while achieving a standard test error of only 6.45\%.  With our technique, the relative increase in the standard error for the robust model is reduced from 178.1\% to just 45.5\%. Moreover, we provide mathematical analysis of  Interpolated Adversarial Training to confirm its efficiencies  and demonstrate its advantages in terms of robustness and generalization.  
\end{abstract}

\begin{keyword}
Adversarial Robustness, Mixup, Manifold Mixup, Standard Test Error
\end{keyword}

\end{frontmatter}



\section{Introduction}

Deep neural networks have been highly successful across a variety of tasks. This success has driven applications in the areas where reliability and security are critical, including  face recognition~\citep{sharif2017adversarial}, self-driving cars~\citep{bojarski2016end}, health care, and malware detection~\citep{lecun2015deep}.  Security concerns emerge when adversaries of the system stand to benefit from a system performing poorly.  Work on \emph{Adversarial examples}~\citep{szegedy2013adv} has shown that neural networks are vulnerable to the attacks perturbing the data in imperceptible ways.  Many defenses have been proposed, but most of them rely on \textit{obfuscated gradients} \citep{athalye2018obfuscate} to give a false illusion of defense by lowering the quality of the gradient signal, without actually improving robustness \citep{athalye2018obfuscate}.  Of these defenses, only adversarial training \citep{kurakin2016advml} was still effective after addressing the problem of obfuscated gradients.  

However, adversarial training has a major disadvantage: it drastically reduces the generalization performance of the networks on unperturbed data samples, especially for small networks. For example, \cite{madry2017adv} reports that adding adversarial training to a specific model increases the standard test error from 6.3\% to 21.6\% on  CIFAR-10.  This phenomenon makes adversarial training difficult to use in practice.  If the tension between the performance and the security turns out to be irreconcilable, then many systems would either need to perform poorly or accept vulnerability, a situation leading to great negative impact.  


\textbf{Our contribution:} We propose to augment the adversarial training  with the interpolation based training, as a solution to the above problem. 
\begin{itemize}
    \item We demonstrate that our approach substantially improves standard test error while still achieving adversarial robustness, using benchmark datasets (CIFAR10 and SHVN) and benchmark architectures (Wide-ResNet and ResNet): Section \ref{sec:robustexp}. 
    \item We demonstrate that our approach does not suffer from \textit{obfuscated gradient} problem by performing black-box attacks on the models trained with our approach: Section \ref{sec:blackbox}.
    \item We perform PGD attack of higher number of steps (upto 1000 steps) and higher value of maximum allowed perturbation/distortion $epsilon$, to demonstrate that the adversarial robustness of our approach remains at the same level as that of the adversarial training : Section \ref{sec:vary_iter}.
    \item We demonstrate that the networks trained with our approach have lower complexity, hence resulting in improved standard test error : Section \ref{sec:complexity}.

\item 
We mathematically analyze the benefit of the proposed method in terms of robustness and generalization. For robustness, we show that  Interpolated Adversarial Training corresponds to approximately minimizing an upper bound of the adversarial loss with additional adversarial perturbations. This explains why models obtained by the  proposed method preserves the adversarial robustness and can sometimes further improve the robustness when compared to standard adversarial training. For generalization, we prove a new generalization bound for   Interpolated Adversarial Training and analyze the benefits of the proposed method. 
\end{itemize}

\section{Related Work}

\textbf{The trade-off between standard test error and adversarial robustness} has been studied in \citep{madry2017adv,tsipras2018odds,aditi,zhang19p}.
While \cite{madry2017adv,tsipras2018odds,zhang19p} empirically demonstrate this trade-off, \cite{tsipras2018odds,zhang19p}  demonstrate this trade-off theoretically as well on the constructed learning problems. Furthermore, \cite{aditi} study this trade-off from the point-of-view of the statistical properties of the \textit{robust objective} \citep{robopt} and the dynamics of optimizing a robust objective on a neural network, and suggest that  adversarial training requires more data to obtain a lower standard test error. Our results on SVHN and CIFAR-10 (Section~\ref{sec:robustexp}) also consistently show higher standard test error with PGD adversarial training.  

While  \cite{tsipras2018odds} presented  data dependent proofs showing that on certain artificially constructed distributions - it is impossible for a robust classifier to generalize as good as a non-robust classifier.  How this relates to our results is an intriguing question.  Our results suggest that the generalization gap between adversarial training and non-robust models can be substantially reduced through better algorithms, but it remains possible that closing this gap entirely on some datasets is impossible.  An important question for future work is how much this generalization gap can be explained in terms of inherent data properties and how much this gap can be addressed through better models.  

Neural Architecture Search \citep{zoph2016nas} was used to find architectures which achieve high robustness to PGD attacks as well as better test error on the unperturbed data \citep{dogus2018intriguing}.  This improved test error on the unperturbed data and a direct comparison to our method is in Table ~\ref{table:cifar10-white-box-wrn}.  However, the method of \cite{dogus2018intriguing} is computationally very expensive as each experiment requires training thousands of models to search for optimal architectures (9360 child models each trained for 10 epochs in \citealp{dogus2018intriguing}), whereas our method involves no significant additional computation.

In our work we primarily concern ourselves with adversarial training, but techniques in the research area of the provable defenses have also shown a trade-off between robustness and generalization on unperturbed data.  For example, the dual network defense of \cite{kolter2017provable} reported 20.38\% standard test error on SVHN for their provably robust convolutional network (most non-robust models are well under 5\% test error on SVHN). \cite{wong2018provable} reported a best standard test \textit{accuracy} of 29.23\% using a convolutional \textit{ResNet} on CIFAR-10 (most non-robust ResNets have accuracy of well over 90\%).  Our goal here is not to criticize this work, as developing provable defenses is a challenging and important area of work, but rather to show that this problem that we explore with \linearadv (on adversarial training type defenses of \citealp{madry2017adv}) is just as severe with provable defenses, and understanding if the insights developed here carry over to provable defenses, could be an interesting area for future work.  


\textbf{Adversarially robust generalization:} Another line of research concerns \textit{adversarially robust generalization}: the performance of adversarially trained networks on adversarial test examples.
\cite{schmidt2018advrobust} observe that a higher sample complexity is needed for better adversarially robust generalization.
\cite{yin2018rademacher} demonstrate that adversarial training results in higher complexity models and hence poorer adversarially robust generalization. Furthermore, \cite{schmidt2018advrobust} suggest that adversarially robust generalization requires more data and \cite{zhai,yair} demonstrate that unlabeled data can be used to improve adversarially robust generalization. In contrast to their work, in this work we focus on improving the generalization performance on unperturbed samples (standard test error), while maintaining  robustness on unseen adversarial examples at the same level. 

\textbf{Interpolation based training techniques:} Yet another line of research \citep{zhang2017mixup,verma2018manifold,ict,mixmatch, jeong2020interpolationbased, DBLP:journals/corr/abs-1909-11715} shows that simple interpolation based training techniques are able to substantially decrease standard test error in fully-supervised and semi-supervised learning paradigms. Along these lines, \cite{vcmtheory} studies the theoretical properties of interpolation based training techniques such as Mixup \citep{zhang2017mixup}

\section{Background}
\label{sec:background}

\subsection{The Empirical Risk Minimization Framework}

Let us consider a general classification task with an underlying data distribution $\mathcal{D}$ 
which consists of examples $x \in \textbf{X}$ and corresponding labels $y \in \textbf{Y}$. The task is to learn a function $f:\textbf{X}\rightarrow \textbf{Y} $ such that for a given $x$, $f$ outputs corresponding $y$. It can be done by minimizing the  risk $\E_{(x, y) \sim \D}[\mathcal{L}(x, y, \theta)]$, where $\mathcal{L}(\theta, x, y)$ is a suitable loss function for instance the cross-entropy loss and $\theta \in \R^p$ is the set of parameters of function $f$. Since this expectation cannot be computed, therefore a common approach is to to minimize the empirical risk $1/N \sum_{i=1}^N \mathcal{L}(x_i, y_i, \theta)$ taking into account only a finite number of examples drawn from the data distribution $\mathcal{D}$, namely $(x_1, y_1),......,(x_N, y_N)$ .

\subsection{Adversarial Attacks and Robustness}

While the empirical risk minimization framework has been very successful and often leads to excellent generalization on the unperturbed test examples, it has the significant limitation that it doesn't guarantee good performance on examples which are carefully perturbed to fool the model \citep{szegedy2013adv,goodfellow2014adv}. That is, the empirical risk minimization framework suffers from a lack of robustness to adversarial attacks.

\cite{madry2017adv} proposed an optimization view of adversarial robustness, in which the adversarial robustness of a model is defined as a min-max problem. Using this view, the parameters $\theta$ of a function $f$ are learned by minimizing $\rho (\theta)$ as described in Equation \ref{eq:minmax}. $\mathcal{S}$ defines a region of points around each example, which is typically selected so that it only contains visually imperceptible perturbations.


\begin{align}
\label{eq:minmax}
        \min_\theta \rho(\theta)&,\quad \text{ where }\quad 
    \rho(\theta) = \mathbb{E}_{(x,y)\sim\mathcal{D}}\left[\max_{\delta\in 
    \mathcal{S}}
    \mathcal{L}(\theta,x+\delta,y)\right] 
\end{align}

Adversarial attacks can be broadly categorized into two categories: Single-step attacks and Multi-step attacks. We evaluated the performance of our model as a defense against the most popular and well-studied adversarial attack from each of these categories. Firstly, we consider the Fast Gradient Sign Method \citep{goodfellow2014adv} which is a single step and can still be effective against many networks.  Secondly, we consider the projected gradient descent attack \citep{kurakin2016advml} which is a multi-step attack. It is slower than FGSM as it requires many iterations, but has been shown to be a much stronger attack \citep{madry2017adv}.  We briefly describe these two attacks as follows:

\textbf{Fast Gradient Sign Method (FGSM)}.  The Fast Gradient Sign Method ~\citep{goodfellow2014adv} produces
$\ell_\infty$ bounded adversaries by the following the sign of the gradient based perturbation.  This attack is cheap since it only relies on computing the gradient once and is often an effective attack against deep networks \citep{madry2017adv,goodfellow2014adv}, especially when no adversarial defenses are employed.  
\begin{align}
\widetilde{x} = x + \varepsilon \operatorname{sgn}(\nabla_x \mathcal{L}(\theta,x,y)).
\end{align}

\textbf{Projected Gradient Descent (PGD)}.  The projected gradient descent attack ~\citep{madry2017adv}, sometimes referred to as FGSM$^k$, is a multi-step extension of the FGSM attack characterized as follows:
\begin{align}
x^{t+1} = \Pi_{x+ \mathcal{S}} \left( x^t +
\alpha\operatorname{sgn}(\nabla_x \mathcal{L}(\theta,x,y))\right).
\end{align}
initialized with $x^0$ as the clean input $x$. $\mathcal{S}$ formalizes the manipulative power of the adversary. $\Pi$ refers to the projection operator, which in this context means projecting the adversarial example back onto the region within an $\mathcal{S}$ radius of the original data point, after each step of size $\alpha$ in the adversarial attack.

\subsection{Gradient Obfuscation by Adversarial Defenses}
Many approaches have been proposed as a defense against adversarial attacks.
A significant challenge with evaluating defenses against adversarial attacks is that many attacks rely upon a network's gradient.  The defense methods which reduce the quality of this gradient, either by making it flatter or noisier can lead to methods which lower the effectiveness of gradient-based attacks, but which are not actually robust to adversarial examples \citep{athalye2017robust,papernot2016security}.  This process, which has been referred to as gradient masking or gradient obfuscation, must be analyzed when studying the strength of an adversarial defense.  

One method for examining the extent to which an adversarial defense gives deceptively good results as a result of gradient obfuscation relies on the observation that black-box attacks are a strict subset of white-box attacks, so white-box attacks should always be at least as strong as black-box attacks.  If a method reports much better defense against white-box attacks than the black-box attack, it suggests that the selected white-box attack is underpowered as a result of the gradient obfuscation.  Another test for gradient obfuscation is to run an iterative search, such as projected gradient descent (PGD) with an unlimited range for a large number of iterations.  If such an attack is not completely successful, it indicates that the model's gradients are not an effective method for searching for adversarial images, and that gradient obfuscation is occurring.  We demonstrate successful results with \linearadv on these sanity checks in Section ~\ref{sec:blackbox}.  Still another test is to confirm that iterative attacks with small step sizes always outperform single-step attacks with larger step sizes (such as FGSM).  If this is not the case, it may suggest that the iterative attack becomes stuck in regions where optimization using gradients is poor due to gradient masking.  In all of our experiments for \linearadv, we found that the iterative PGD attacks with smaller step sizes and more iterations were always stronger than the FGSM attacks (which take a single large step) against our models, as shown in Table~\ref{table:cifar10-white-box-wrn},  Table~\ref{table:cifar10-white-box-prn18}, Table~\ref{table:svhn-white-box} and Table ~\ref{table:svhn-white-box-prn18}.  

\subsection{Adversarial Training}
Adversarial training \citep{goodfellow2014adv} encompasses crafting adversarial examples and using them during training to increase robustness against unseen adversarial examples. To scale adversarial training to large datasets and large models, often the adversarial examples are crafted using the fast single step methods such as FGSM. However, adversarial training with fast single step methods remains vulnerable to adversarial attacks from a stronger multi-step attack such as PGD. Thus, in this work, we consider adversarial training with PGD.
\label{ref:adv_train}

\section{Interpolated Adversarial Training}
We propose \linearadv (IAT), which trains on interpolations of adversarial examples along with interpolations of unperturbed examples.  We use the techniques of Mixup \citep{zhang2017mixup} and Manifold Mixup \citep{verma2018manifold} as ways of interpolating examples. Learning is performed in the following four steps when training a network with \linearadv. In the first step, we compute the loss from a unperturbed (non-adversarial) batch (with interpolations based on either Mixup or Manifold Mixup).  In the second step, we generate a batch of adversarial examples using an adversarial attack (such as Projected Gradient Descent (PGD) \citep{madry2017adv} or Fast Gradient Sign Method (FGSM) \citep{goodfellow2014adv}). In the third step, we train against these adversarial examples with the original labels, with interpolations based on either Mixup or Manifold Mixup. In the fourth step, we obtain the average of the loss from the unperturbed batch and the adversarial batch and update the network parameters using this loss. Note that following \citep{kurakinGB16,tsipras2018odds}, we use both the unperturbed and adversarial samples to train the model \linearadv and we use it in our baseline adversarial training models as well.  The detailed algorithm is described in Algorithm Block~\ref{alg:AdvMix}.  

As \linearadv combines adversarial training with either Mixup \citep{zhang2017mixup} or Manifold Mixup \citep{verma2018manifold}, we summarize these supporting methods in more detail.  The Mixup method \citep{zhang2017mixup} consists of drawing a pair of samples from the dataset $(x_i,y_i) \sim p_D$ and $(x_j, y_j) \sim p_D$ and then taking a random linear interpolation in the input space $\tilde{x} = \lambda x_i + (1-\lambda)x_j$.  This $\lambda$ is sampled randomly on each update (typically from a Beta distribution).  Then the network $f_\theta$ is run forward on the interpolated input $\tilde{x}$ and trained using the same linear interpolation of the losses $\mathcal{L} = \lambda L(f_\theta(\tilde{x}),y_i) + (1-\lambda) L(f_\theta(\tilde{x}),y_j)$.  Here $L$ refers to a loss function such as cross entropy.  

The Manifold Mixup method \citep{verma2018manifold} is closely related to Mixup from a computational perspective, except that the layer at which interpolation is performed, is selected randomly on each training update.

Adversarial training consists of generating adversarial examples and training the model to give these points the original label.  For generating these adversarial examples during training, we used the Projected Gradient Descent (PGD) attack, which is also known as iterative FGSM.  This attack consists of repeatedly updating an adversarial perturbation by moving in the direction of the sign of the gradient multiplied by some step size, while projecting back to an $L_\infty$ ball by clipping the perturbation to maximum $\epsilon$.  Both $\epsilon$, the step size to move on each iteration, and the number of iterations are hyperparameters for the attack.

\begin{algorithm}[ht!]
\setstretch{1.2}
\caption{ 
The Interpolated Adversarial Training Algorithm}
\label{alg:AdvMix}
\begin{tabbing}
\Req $f_{\theta}$: Neural Network\\
\Req $Mix$: A way of combining examples (Mixup or Manifold Mixup )\\
\Req $D$: Data samples\\
\Req $N$: Total number of updates\\
\Req $Loss$: A function which runs the neural network with $Mix$ applied\\
\Xhh {\bf for} $k= 1, \ldots, N$ {\bf do} \\
      \XXhh Sample $(x_i,y_i)\sim D$ \quad\Comment{Sample batch} \\
      \XXhh $\mathcal{L}_{c} = Loss(f_{\theta}, Mix, x_i,y_i)$ \quad\Comment{Compute loss on unperturbed data \\
      \tbhh  using Mixup (or Manifold Mixup)} \\
      \XXhh $\tilde{x}_i = attack(x_i, y_i)$ \quad\Comment{Run attack (e.g. PGD as in \citealp{madry2017adv})} \\
      \XXhh $\mathcal{L}_{a} = Loss(f_{\theta}, Mix, \tilde{x}_i,y_i)$ \quad\Comment{Compute adversarial loss on adversarial  \\
      \tbhh samples using Mixup (or Manifold Mixup)} \\
      \XXhh $\mathcal{L}  = (\mathcal{L}_{c} + \mathcal{L}_{a})/2$ \quad\Comment {Combined loss}\\
      \XXhh $ g \gets \nabla_{\theta} \mathcal{L}$ \quad\Comment {Gradients of the combined Loss }\\
      \XXhh $\theta \gets \text{Step}(\theta, g_\theta)$ \quad\Comment{Update parameters using gradients $g$ (e.g. SGD )}\\
\Xhh {\bf end for}
\end{tabbing}
\end{algorithm}

\bigskip

\textbf{Why \linearadv helps to improve the standard test accuracy: }We present two arguments for why \linearadv can improve standard test accuracy:

\textbf{Increasing the training set size:} \cite{aditi} has shown that adversarial training could require more training samples to attain a higher standard test accuracy. Mixup \citep{zhang2017mixup} and Manifold Mixup \citep{verma2018manifold} can be seen as the techniques that increase the effective size of the training set by creating novel training samples. Hence these techniques can be useful in improving standard test accuracy.

\textbf{Information compression:}
\cite{tishby2015info,shwartz2017info} have shown a relationship between compression of information in the features learned by deep networks and generalization.  This relates the degree to which deep networks compress the information in their hidden states to bounds on generalization, with a stronger bound when the deep networks have stronger compression.

To evaluate the effect of adversarial training on compression of the information in the features, we performed an experiment where we take the representations learned after training, and study how well these frozen representations are able to successfully predict fixed random labels.  If the model compresses the representations well, then it will be harder to fit random labels. In particular, we ran a small 2-layer MLP on top of the learned representations to fit random binary labels.  In all cases we trained the model with the random labels for 200 epochs with the same hyperparameters.  For fitting 10000 randomly labeled examples, we achieved accuracy of: 92.08\% (Baseline) and  97.00\% (PGD Adversarial Training): showing that adversarial training made the representations much less compressed. 

Manifold Mixup \citep{verma2018manifold} has shown to learn more compressed features. Hence, employing Manifold Mixup with the adversarial training might mitigate the adverse effect of the adversarial training. Using the same experimental setup as above, we achieved accuracy of : 64.17\% (Manifold Mixup) and 71.00\% (IAT using Manifold Mixup).  
 
These results suggest that adversarial training causes the learned representations to be less compressed which may be the reason for poor standard test accuracy. At the same time, IAT with Manifold Mixup significantly reduces the ability of the model to learn less compressed features, which may potentially improve standard test accuracy.

\begin{figure*}
    \centering
    \includegraphics[width=\linewidth]{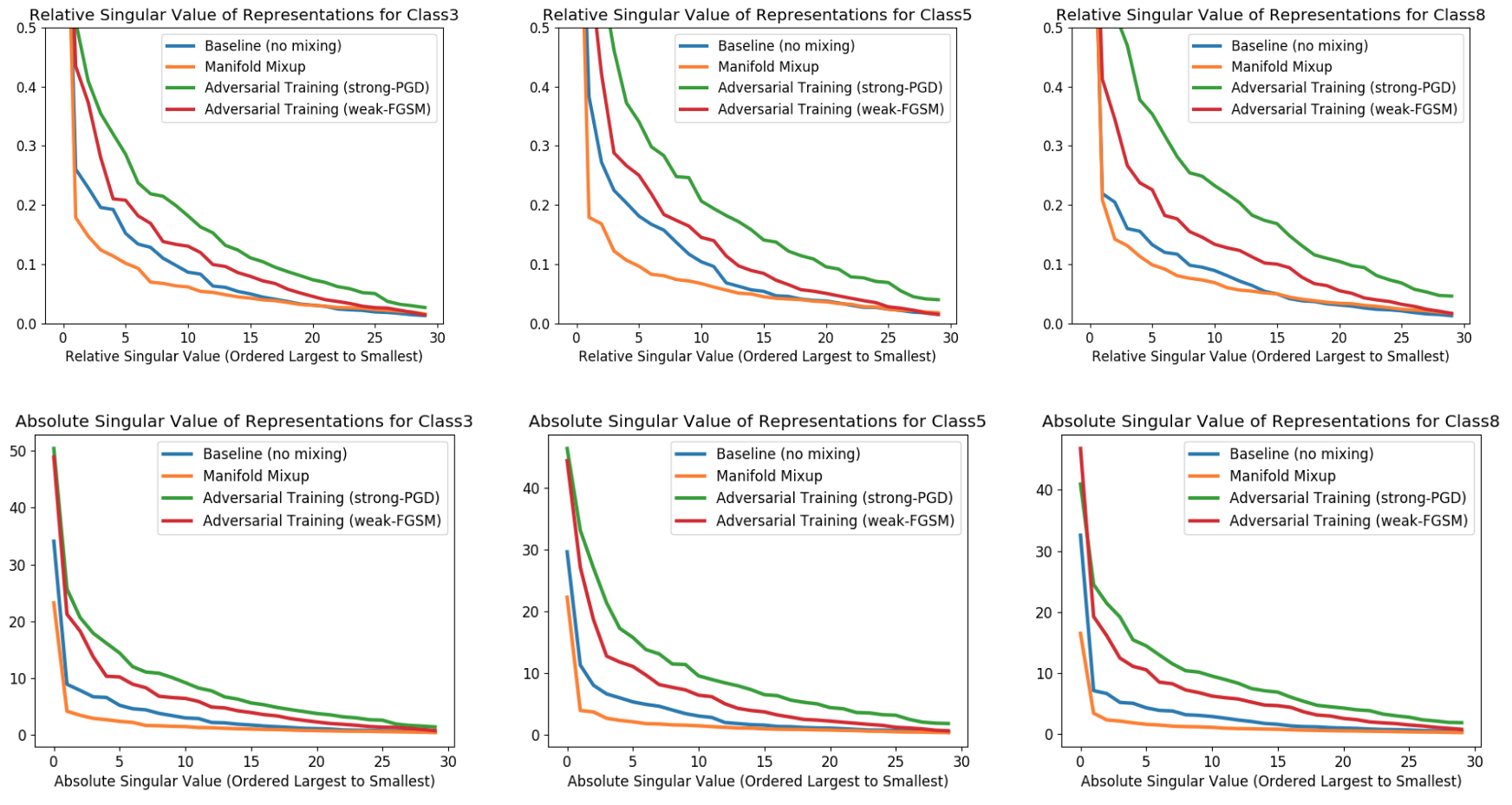}
    \caption{Adversarial Training (especially with PGD) training makes representations have substantially more directions of significant variability (both when measured in an absolute sense and when measured relative to the largest singular value).  }
    \label{fig:svd}
\end{figure*}

\begin{table}[]
\caption{Soft Rank (sum of singular values divided by largest singular value) of the representations (following first layer) from models trained with various methods.  We report separately per MNIST class.  FGSM and PGD refer to models trained with adversarial training.  We note that FGSM slightly increases the numerical rank, but PGD (a much stronger attack) often dramatically increases it.  }
\label{tb:numerical_rank}
\centering
\setlength{\tabcolsep}{0.6em}
{\renewcommand{\arraystretch}{1.2}
\begin{tabular}{|l|l|l|l|l|l}
\cline{1-5}
Class & Baseline & \makecell{Manifold\\Mixup} & FGSM & PGD \\
\cline{1-5}
0 & 2.87 & 2.14 & 3.34 & 3.91 \\\cline{1-5}
1 & 2.90 & 1.91 & 2.92 & 4.15 \\\cline{1-5}
2 & 3.74 & 2.64 & 4.29 & 6.51 \\\cline{1-5}
3 & 3.27 & 2.66 & 4.29 & 5.48 \\\cline{1-5}
4 & 3.18 & 2.41 & 3.58 & 4.70 \\\cline{1-5}
5 & 3.72 & 2.74 & 4.82 & 6.75 \\\cline{1-5}
6 & 3.22 & 2.26 & 3.66 & 5.90 \\\cline{1-5}
7 & 3.43 & 2.39 & 3.66 & 4.42 \\\cline{1-5}
8 & 3.09 & 2.78 & 4.50 & 6.84 \\\cline{1-5}
9 & 3.20 & 2.46 & 3.71 & 5.19 \\\cline{1-5}
\end{tabular}
}
\end{table}

To provide further evidence for a difference in the compression characteristics, we trained 5-layer fully-connected models on MNIST and considered a bottleneck layer of 30 units directly following the first hidden layer.  We then performed singular value decomposition on the per-class representations and looked at the spectrum of singular values (Figure~\ref{fig:svd}).  We found that PGD dramatically increased the number of singular values with large values relative to a baseline model (FGSM was somewhere in-between baseline and PGD).  

\section{Experiments}

\subsection{Adversarial Robustness}
\label{sec:robustexp}

The goal of our experiments is to provide empirical support for our two major assertions: that adversarial training hurts performance on unperturbed data (which is consistent with what has been previously observed in \citealp{madry2017adv,tsipras2018odds,zhang19p}) and to show that this difference can be reduced with our \linearadv method.  Finally, we want to show that \linearadv is adversarially robust and does not suffer from gradient obfuscation \citep{athalye2018obfuscate}.  

In our experiments we always perform adversarial training using a 7-step PGD attack but we evaluate on a variety of attacks: FGSM, PGD (with a varying number of steps and hyperparameters).

\textbf{Architecture and Datasets}: We conducted experiments on competitive networks to demonstrate that \linearadv can improve generalization performance without sacrificing adversarial robustness. We used two architectures : First, the WideResNet architecture proposed in \citep{he2015resnet,zagoruyko2016wrn} and used in \citep{madry2017adv} for adversarial training \footnote{While \cite{madry2017adv} use WRN32-10 architecture, we use the standard WRN28-10 architecture, so our results are not directly comparable to their results.}. Second, the PreActResnet18 architecture which is a variant of the residual architecture of \citep{he2015resnet}. We used SGD with momentum optimizer in our experiments. We ran the experiments for 200 epochs with initial learning rate is 0.1 and it is annealed by a factor of 0.1 at epoch 100 and 150. We use the batch-size of 64 for all the experiments.

We used two benchmark datasets (CIFAR10 and SVHN), which are commonly used in the adversarial robustness literature \citep{madry2017adv}. The CIFAR-10 dataset consists of 60000 color images each of size $32\times32$, split between 50K training and 10K test images.  This dataset has ten classes, which include pictures of cars, horses, airplanes and deer.  The SVHN dataset consists of 73257 training samples and 26032 test samples each of size $32\times32$.  Each example is a close-up image of a house number (the ten classes are the digits from 0-9)

\begin{table*}[ht!]
\setlength{\tabcolsep}{0.1em}
\centering
\setlength{\tabcolsep}{0.2em} 
{\renewcommand{\arraystretch}{1.5}
\caption{CIFAR10 results (error in \%) to white-box attacks on WideResNet20-10 evaluated on the test data. The rows correspond to the training mechanism and columns correspond to adversarial attack methods. The upper part of the Table consists of training mechanisms that do not employ any explicit adversarial defense. The lower part of the Table consist of methods that employ adversarial training as a defense mechanism \protect\footnotemark . For PGD, we used a  $\ell_\infty$ projected gradient descent with size $\alpha = 2$, and $\epsilon = 8$. For FGSM, we used  $\epsilon = 8$. Our method of \linearadv improves standard test error in comparison to adversarial training (refer to the first column) and maintains the adversarial robustness on the same level as that of adversarial training. The method of \cite{dogus2018intriguing} is close to our method in terms of standard test error  and adversarial robustness however it needs several orders of magnitude more computation (it trains 9360 models) for its neural architecture search.}

\label{table:cifar10-white-box-wrn}
\resizebox{\textwidth}{!}{\begin{tabular}{|c|c|c|c|c|}
\hline
\backslashbox{Training}{Adversary}         & No Attack & FGSM   &  \makecell{PGD  (7 steps)} & \makecell{PGD  (20 steps)} \\ \hline
\makecell{Baseline\\ \citep{madry2017adv}}               & 4.80  & 67.3       & 95.9   & 96.5          \\ \hline

\makecell{Baseline}               & \makecell{4.43$\pm$0.09  }  & \makecell{56.92$\pm$0.79 }   & \makecell{99.83$\pm$0.02 }   & \makecell{100.0$\pm$0.0 }  \\ \hline

\makecell{Mixup}              & \makecell{3.25$\pm$0.11 }   & \makecell{ 32.63$\pm$0.88 } & \makecell{92.75$\pm$0.61 }  & \makecell{99.27$\pm$0.03 } \\ \hline 

\makecell{Manifold Mixup}   & 3.15$\pm$0.09   & 38.41$\pm$2.64    & 89.77$\pm$3.68  & 98.34$\pm$1.03 \\ \hline

\makecell {Neural Architecture Search  \citep{dogus2018intriguing}}   & 6.80  & 36.4       & 49.9  & -       \\ \hline
\hline
\makecell {PGD (7 steps) \citep{madry2017adv}}   & 12.70  & 43.90       & 50.00  & 54.20       \\ \hline

\makecell{PGD  (7 steps) (our code)}   & 12.32$\pm$0.14   &  41.87$\pm$0.04    & 50.97$\pm$0.15  & \textbf{54.87$\pm$0.16}        \\ \hline

\makecell{Interpolated  Adversarial Training \\ (with Mixup)}   & \textbf{ 6.45$\pm$0.52}    &  \textbf{33.83$\pm$0.86}       & \textbf{49.88$\pm$0.55}  & 54.89$\pm$1.37        \\ \hline
\makecell{Interpolated  Adversarial Training \\ (Manifold Mixup)}   & 6.48$\pm$0.30  &  35.18$\pm$0.30      & 50.08$\pm$0.48   & 55.18$\pm$0.18    \\ \hline

\end{tabular}}
}
\end{table*}
\footnotetext{Since the objective of this work is to demonstrate the effectiveness the \linearadv over adversarial training for improving the standard test error as well as maintaining the adversarial robustness to the same levels, we highlight the best results in the lower part of the Table: the methods in the upper part of the Table have better standard test error ("No-attack" column), but their adversarial robustness is very poor against strong adversarial attacks (PGD, 7 steps and 20 steps)}

\begin{table*}[ht!]
\setlength{\tabcolsep}{0.1em}
\centering
\setlength{\tabcolsep}{0.2em} 
{\renewcommand{\arraystretch}{1.5}
\caption{CIFAR10 results (error in \%) to white-box attacks on PreActResnet18. Rest of the details are same as Table \ref{table:cifar10-white-box-wrn}}
\label{table:cifar10-white-box-prn18}
\resizebox{\textwidth}{!}{{\begin{tabular}{|c|c|c|c|c|}
\hline
\backslashbox{Training}{Adversary}         & No Attack & FGSM   &  \makecell{PGD  (7 steps)} & \makecell{PGD  (20 steps)} \\ \hline

\makecell{Baseline}   & \makecell{5.88$\pm$0.16 }  & \makecell{78.11$\pm$1.31 }   & \makecell{99.85$\pm$0.18 }   & \makecell{100.0 $\pm$0.0 }  \\ \hline 
\makecell{Mixup}              & \makecell{4.42$\pm$0.03  }   & \makecell{38.32$\pm$0.76 } & \makecell{97.48$\pm$0.15 }  & \makecell{99.88$\pm$0.02 } \\ \hline 
\makecell{Manifold Mixup}   & 4.10$\pm$0.09 &  37.57$\pm$1.31 & 88.50$\pm$3.20  & 97.80$\pm$1.02  \\ \hline 
\hline
\makecell{PGD  (7 steps)}   & 14.12$\pm$0.06   &  48.56$\pm$0.14     & 57.76$\pm$0.19  & \textbf{61.00$\pm$0.24}     \\ \hline 
\makecell{Interpolated  Adversarial Training \\ (with Mixup)}   & \textbf{10.12$\pm$0.33 }  &  \textbf{40.71$\pm$0.65}      & \textbf{55.43$\pm$0.45} & 61.62$\pm$1.01          \\ \hline 
\makecell{ Interpolated  Adversarial Training \\ (Manifold Mixup) }   & 10.30$\pm$0.15   &  42.48$\pm$0.29      &  55.78$\pm$0.67   & 61.80$\pm$0.51  \\ \hline 

\end{tabular}}
}}
\end{table*}

\begin{table*}[htb!]
\setlength{\tabcolsep}{0.1em}
\centering
\setlength{\tabcolsep}{0.2em} 
{\renewcommand{\arraystretch}{1.5}
\caption{SVHN results (error in \%) to white-box attacks on WideResNet20-10 using the 26032 test examples. The rows correspond to the training mechanism and columns correspond to adversarial attack methods. For PGD, we used a  $\ell_\infty$ projected gradient descent with step-size $\alpha = 2$, and $\epsilon = 8$. For FGSM, we used  $\epsilon = 8$. Our method of \linearadv improves standard test error and adversarial robustness.}
\label{table:svhn-white-box}
\resizebox{\textwidth}{!}{{\begin{tabular}{|c|c|c|c|c|}
\hline
\backslashbox{Training}{Adversary}         & No Attack & FGSM   &  \makecell{PGD  (7 steps)} & \makecell{PGD  (20 steps)} \\ \hline
\makecell{Baseline} & \makecell{3.07$\pm$0.03}  & \makecell{39.36$\pm$1.16 } & \makecell{94.00$\pm$0.65 } & \makecell{98.59$\pm$0.13 } \\ \hline
\makecell{ Mixup}              & \makecell{2.59$\pm$0.08} & \makecell{26.93$\pm$1.96 } &\makecell{90.18$\pm$3.43 } & \makecell{98.78$\pm$0.79 }\\ \hline
\makecell{ Manifold Mixup} & \makecell{2.46$\pm$0.01} & \makecell{29.74$\pm$0.99 } &\makecell{77.49$\pm$3.82 } & \makecell{94.77$\pm$1.34 }\\ \hline
\hline
\makecell{PGD (7 steps)}   & \makecell{6.14$\pm$0.13 } & \makecell{29.10$\pm$0.72 } & \makecell{46.97$\pm$0.49 } &  \makecell{53.47$\pm$0.52 } \\ \hline
\makecell{Interpolated  Adversarial Training \\ (with Mixup)}   & \makecell{3.47$\pm$0.11 }  & \makecell{\textbf{22.08$\pm$0.15} }       & \makecell{45.74$\pm$0.11 }  & \makecell{58.40$\pm$0.46 } \\ \hline 
\makecell{Interpolated  Adversarial Training \\ (Manifold Mixup)}   & \makecell{\textbf{3.38$\pm$0.22} }  & \makecell{22.30$\pm$1.07 }      & \makecell{\textbf{42.61$\pm$0.40} } & \makecell{\textbf{52.79$\pm$0.22} } \\ \hline 


\end{tabular}}
}}
\end{table*}

\begin{table*}[ht!]
\setlength{\tabcolsep}{0.1em}
\centering
\setlength{\tabcolsep}{0.2em} 
{\renewcommand{\arraystretch}{1.5}
\caption{SVHN results (error in \%) to white-box attacks on PreActResnet18. Rest of the details are same as Table ~\ref{table:svhn-white-box}.  }
\label{table:svhn-white-box-prn18}
\resizebox{\textwidth}{!}{{\begin{tabular}{|c|c|c|c|c|}
\hline
\backslashbox{Training}{Adversary}         & No Attack & FGSM   &  \makecell{PGD  (7 steps)} & \makecell{PGD  (20 steps)} \\ \hline

\makecell{Baseline}   & \makecell{ 3.47$\pm$0.09}  & \makecell{ 50.73$\pm$0.22 }   & \makecell{ 96.37$\pm$0.12 }   & \makecell{ 98.61$\pm$0.06 }  \\ \hline 
\makecell{Mixup}              & \makecell{ 2.91$\pm$0.06 }   & \makecell{31.91$\pm$0.59 } & \makecell{ 98.43$\pm$0.85 }  & \makecell{99.95$\pm$0.02 } \\ \hline 

\makecell{Manifold Mixup}   & 2.66$\pm$0.02  &  29.86$\pm$3.60   & 72.47$\pm$1.82  & 94.00$\pm$0.96  \\ \hline 
\hline
\makecell{PGD  (7 steps)}   & 5.27$\pm$0.13  &   26.78$\pm$0.62  & 47.00$\pm$0.22  & 54.40$\pm$0.42          \\ \hline 
\makecell{ Interpolated  Adversarial Training \\ (with Mixup)}   & 3.63$\pm$0.05   &  \textbf{23.57$\pm$0.64}     & 47.69$\pm$0.22 & 54.62$\pm$0.18          \\ \hline 
\makecell{ Interpolated  Adversarial Training \\ (Manifold Mixup) }   & \textbf{3.61$\pm$0.22}   &  24.95$\pm$0.92  & \textbf{ 46.62$\pm$0.28}   & \textbf{54.13$\pm$1.08}  \\ \hline

\end{tabular}}
}}
\end{table*}

\textbf{Data Pre-Processing and Hyperparameters:} The data augmentation and pre-processing is exactly the same as in \citep{madry2017adv}. Namely, we use random cropping and horizontal flip for CIFAR10. For SVHN, we use random cropping. We use the per-image standardization for pre-processing.  For adversarial training, we generated the adversarial examples using a PGD adversary using a  $\ell_\infty$ projected gradient descent with 7 steps of size $2$, and $\epsilon = 8$. For the adversarial attack, we used an FGSM adversary with $\epsilon = 8$ and a PGD adversary with 7 steps and 20 steps of size $2$ and $\epsilon = 8$.  

In the \linearadv experiments, for generating the adversarial examples, we used PGD with the same hyper-parameters as described above. For performing interpolation, we  used either Manifold Mixup with $\alpha = 2.0$ as suggested in \citep{verma2018manifold} or Mixup with  
$alpha=1.0$ as suggested in \citep{zhang2017mixup}. For Manifold Mixup, we performed the interpolation at a randomly chosen layer from  the input layer, the output of the first resblock or  the output of the second resblock, as recommended in \citep{verma2018manifold}.  

\textbf{Results:} The results are presented in Table~\ref{table:cifar10-white-box-wrn},  Table~\ref{table:cifar10-white-box-prn18}, Table~\ref{table:svhn-white-box} and Table~\ref{table:svhn-white-box-prn18}. We observe that IAT consistently improves standard test error relative to models using just adversarial training, while maintaining adversarial robustness at the same level. For example, in Table~\ref{table:cifar10-white-box-wrn}, we observe that the baseline model (no adversarial training) has standard test error of $4.43\%$ whereas PGD adversarial increase the standard test error to $12.32\%$: a relative increase of $178\%$ in standard test error. With \linearadv, the standard test error is reduced to $6.45\%$, a relative increase of only $45\%$ in standard test error as compared to the baseline, while the degree of adversarial robustness remains approximately unchanged, across varies type of adversarial attacks.

\subsection{Transfer Attacks}
\label{sec:blackbox}
As a sanity check that \linearadv does not suffer from gradient obfuscation \citep{athalye2018obfuscate}, we performed a transfer attack evaluation on the CIFAR-10 dataset using the PreActResNet18 architecture.  In this type of evaluation, the model which is used to generate the adversarial examples is different from the model used to evaluate the attack.  As these transfer attacks do not use the target model's parameters to compute the adversarial example, they are considered black-box attacks.  In our evaluation (Table~\ref{tb:transfer}) we found that black-box transfer were always substantially weaker than white-box attacks, hence \linearadv does not suffer from gradient obfuscation \citep{athalye2018obfuscate}. Additionally, in Table \ref{tb:epsilon}, we observe that increasing $\epsilon$ results in $100\%$ success of attack, providing added evidence that \linearadv does not suffer from gradient obfuscation \citep{athalye2018obfuscate}. 

\begin{table*}[ht!]
\centering
\setlength{\tabcolsep}{0.3em} 
{\renewcommand{\arraystretch}{1.5}
\caption{Transfer Attack evaluation of \linearadv on CIFAR-10 reported in terms of error rate (\%).  Here we consider three trained models, using normal adversarial training (Adv), IAT with mixup (IAT-M), and IAT with manifold mixup (IAT-MM).  On each experiment, we generate adversarial examples only using the model listed in the column and then evaluate these adversarial examples on the target model listed in the row.  Note that in all of our experiments white box attacks (where the attacking model and target models are the same) led to stronger attacks than black box attacks, which is the evidence that our approach does not suffer from gradient obfuscation \citep{athalye2018obfuscate}.  }
\label{tb:transfer}
\begin{tabular}{|l|l|l|l||l|l|l||l|l|l|}
\hline
\makecell{$\epsilon$} & \multicolumn{3}{l||}{\makecell{2}}    & \multicolumn{3}{l||}{\makecell{5}}  & \multicolumn{3}{l|}{\makecell{10}} \\ \hline
\backslashbox{Target}{Attack} & \makecell{Adv.\\Train}  & \makecell{IAT\\M}        & \makecell{IAT\\MM}    & \makecell{Adv.\\Train}  & \makecell{IAT\\M}  & \makecell{IAT\\MM}   & \makecell{Adv.\\Train}  & \makecell{IAT\\M}  & \makecell{IAT\\MM}  \\ \hline
Adv. Train  & 28.54 & 21.11   &  21.87    & 43.68 & 28.10  & 29.21    & 74.66 & 44.39 & 48.14   \\ \hline
IAT-M   & 17.14   & 25.57 & 18.07   & 25.02    & 45.03 & 28.85   & 48.74  & 78.49 & 51.35    \\ \hline
IAT-MM    & 18.57     & 18.74   & 25.71 & 26.84    & 26.7  & 43.23 & 50.43 & 48.11 & 77.05 \\ \hline
\end{tabular}}
\end{table*}

\subsection{Varying the number of iterations and $\epsilon$ for Iterative Attacks}
\label{sec:vary_iter}

\begin{table*}[ht!]
\centering
\setlength{\tabcolsep}{0.4em} 
{\renewcommand{\arraystretch}{1.5}
\caption{Robustness on CIFAR-10 PreActResNet18 (Error \%) with increasing $\epsilon$ and a fixed number of iterations (20).  \linearadv and adversarial training both have similar degradation in robustness with increasing $\epsilon$, but \linearadv tends to be slightly better for smaller $\epsilon$ and adversarial training is slightly better for larger $\epsilon$}
\label{tb:epsilon}
\begin{tabular}{|l|l|l|l|l|l|l|l|l|l|}
\hline
\backslashbox{Model}{Attack $\epsilon$} & 1 & 2 & 10 & 15 & 20 & 25 & 50  \\\hline
Adversarial Training & 21.44 & 28.54 & 74.66  & 92.43 & 98.53 & 99.77 & 100.0 \\\hline
IAT (Mixup) & 17.90 & 25.57 & 78.49  & 93.73 & 98.54 & 99.72 & 100.0 \\\hline
IAT (Manifold Mixup) & 18.24 & 25.71 & 77.05  & 93.31 & 98.67 & 99.85 & 100.0 \\\hline
\end{tabular}
}
\end{table*}

\begin{table*}[ht!]
\centering
\setlength{\tabcolsep}{0.6em} 
{\renewcommand{\arraystretch}{1.5}
\caption{Robustness on CIFAR-10 PreActResNet-18 (Error \%) with fixed $\epsilon=5$ and a variable number of iterations used for the adversarial attack.  }
\label{tb:steps}
\begin{tabular}{|l|l|l|l|l|l|l|l|}
\hline
\backslashbox{Model}{Num. Iterations} & 5 & 10 & 20 & 50 & 100 & 1000 \\\hline
Adversarial Training & 42.35 & 43.44 & 43.68 & 43.76 & 43.80 & 43.83 \\\hline
IAT (Mixup) & 41.29 & 44.23 & 45.03 & 45.31 & 45.42 & 45.56 \\\hline
IAT (Manifold Mixup) & 40.74 & 42.72 & 43.23 & 43.43 & 43.51 & 43.60 \\\hline
\end{tabular}
}
\end{table*}

To further study the robustness of \linearadv, we studied the effect of changing the number of attack iterations and the range of the adversarial attack $\epsilon$.  Some adversarial defenses \citep{engstrom2018evaluating} have been found to have increasing vulnerability when exposed to attacks with a large number of iterations.  We studied this (Table~\ref{tb:steps}) and found that both adversarial training and \linearadv have robustness which declines only slightly with an increasing number of steps, with almost no difference between the 100 step attack and the 1000 step attack.  Additionally we varied the $\epsilon$ to study if \linearadv was more or less vulnerable to attacks with $\epsilon$ different from what the model was trained on.  We found that \linearadv is somewhat more robust when using smaller $\epsilon$ and slightly less robust when using larger $\epsilon$ (Table~\ref{tb:epsilon}).

\section{Theoretical Analysis}
\label{sec:complexity}
In this section, we establish mathematical properties of  IAT with Mixup.
We begin in Section \ref{sec:1} with additional notation and then analyze the effect of IAT\ on adversarial robustness in Section \ref{sec:2}.  Moreover, we discuss the effects of IAT\ on generalization in Section \ref{sec:3} by showing how ICT\ can reduce overfitting and lead to better   generalization behaviors. The proofs of all theorems and propositions are presented in \ref{app:B} using a key lemma proven in  \ref{app:A}.

\subsection{Notation} \label{sec:1}
In order to present our analysis succinctly, we introduce additional notation as follows. The standard mixup loss $\Lcal_c$  can be written as 
\begin{equation}
\Lcal_c=\frac{1}{n^2}\sum_{i,j=1}^n \EE_{\lambda\sim \Dcal_\lambda}\ell(f_{\theta}(\tx_{i,j}(\lambda)),\ty_{i,j}(\lambda)),
\end{equation}
where
$
\tx_{i,j}(\lambda)=\lambda x_i+(1-\lambda) x_j,
$
$
\ty_{i,j}(\lambda)=\lambda y_i+(1-\lambda)y_j
$, and
$
\lambda\in[0,1]. 
$
Here, $\Dcal_\lambda$ represents the Beta distribution $Beta(\alpha,\beta)$ with some hyper-parameters $\alpha,\beta>0$. Similarly, the adversarial-mixup loss $\Lcal_a$ used in IAT\ can be defined by 
\begin{equation}
\Lcal_a=\frac{1}{n^2}\sum_{i,j=1}^n \EE_{\lambda\sim \Dcal_\lambda}\ell(f_{\theta}(\cx_{i,j}(\lambda)),\ty_{i,j}(\lambda)),
\end{equation}
where 
$
\hdelta_{i}= \argmax_{\delta_i:\|\delta_i\|_{\rho}\le \epsilon} \ell(f_{\theta}(x_i+\delta_i),y_i),
$
$
\hx_i = x_i + \hdelta_i,
$
and
$
\cx_{i,j}(\lambda)=\lambda\hx_i+(1-\lambda) \hx_j. 
$
Using these two types of losses, the whole IAT loss is defined by
\begin{equation}
\Lcal = \frac{\Lcal_c +\Lcal_a}{2}.   
\end{equation}
In  this section, we focus on the  following family of loss functions: 
$
\ell(q,y)=h(q)-yq,
$ for some twice differentiable function
$h$. This family of loss functions $\ell$ includes many commonly used losses, including the logistic loss and the cross-entropy loss. 

Given a set $\Fcal$ of functions $x\mapsto f(x)$, the Radamachor complexity \citep{bartlett2002rademacher,mohri2012foundations} of the set $\ell \circ \Fcal=\{ (x,y)\mapsto\ell(f(x;\theta),y): f\in \Fcal\}$ can be defined by 
$$
\Rcal_{n}(\ell \circ \Fcal):=\EE_{S,\sigma}\left[\sup_{f \in \Fcal}\frac{1}{n} \sum_{i=1}^n \sigma_i \ell(f(x_{i}),y_{i})\right],
$$ 
where $S=((x_i, y_i))_{i=1}^n$. Here, $\sigma_1,\dots,\sigma_n$ are independent uniform random variables taking values in $\{-1,1\}$.   We  denote by $\tD_\lambda$ the uniform mixture of two Beta distributions,  $\frac{\alpha}{\alpha+\beta}Beta(\alpha+1,\beta)+\frac{\beta}{\alpha+\beta}Beta(\beta+1,\alpha)$. We let   $\cD_\hx$ be the empirical distribution of the perturbed training samples $(\hx_1,\cdots,\hx_n)$, and
define   $\cD_x$ to be the empirical distribution of the  training samples $(x_1,\cdots,x_n)$.
Let  $\cos(a, b)$ be the cosine similarity of two vector $a$ and $b$.

\subsection{The effect of IAT on Robustness} \label{sec:2}
In this subsection, we study how  adding Mixup to adversarial training affects the robustness of the model   by analyzing the adversarial-mixup loss $\Lcal_a$ used in IAT. 
This subsection focuses on the binary cross-entropy loss  $\ell$ by setting $h(z)=\log(1+e^z)$ and $y \in \{0,1\}$, whereas the next section  considers a more general setting. We define a set $\Theta$ of parameter vectors by 
$$
\Theta = \left\{\theta \in \RR^{d} :  y_i (f_{\theta}(x_{i}+ \hdelta_{i}))+(y_i-1)(f_{\theta}(x_{i}+ \hdelta_{i}))\ge 0 \text{ for all } i = 1,\dots, n \right\}.
$$
Note that this set $\Theta$ contains the set of all parameter vectors with correct classifications of training points   (before Mixup) as   
$$
\Theta \supseteq \{\theta \in \RR^{d} :\text{ $\one\{f_{\theta}(x_{i}+\hdelta_{i}) \ge 0\}=y_{i}$ for all $i=1,\dots,n$}\}. 
$$    
Therefore, the condition of $\theta \in \Theta$ is satisfied when the deep neural network classifies  all labels correctly for  the training data with  perturbations before Mixup.
 As the training error (although not training loss) becomes  zero in finite time in many practical cases, the condition of $\theta \in \Theta$ is  satisfied in   finite time in many practical cases.
Accordingly, we study the effect of IAT on robustness in the regime of $\theta \in \Theta$.

Theorem \ref{thm:10} shows that the adversarial-mixup loss $\Lcal_a$  is approximately an upper bound of the adversarial loss with the adversarial perturbations of $x_i \mapsto x_i+\hdelta_{i}+\delta^{\mix}_{i}$  where $\|\delta_i\|_{\rho}\le \epsilon $  is the standard adversarial perturbation and   $\|\delta_{i}\|_2 \le  \epsilon^{\mix}_i$ is the    non-standard \textit{additional} perturbation due to IAT. In other words,  IAT is approximately minimizing the upper bound of the adversarial loss with additional adversarial perturbation $\|\delta_{i}\|_2 \le  \epsilon^{\mix}_i$ with data-dependent radius $\epsilon^{\mix}_i$ for each $i \in \{1,\dots,n\}$. Therefore,  adding Mixup to adversarial training (i.e., IAT)   does not decrease the effect of  the original adversarial training on the robustness approximately (where the approximation error is in the order of $(1-\tlambda)^3$ as discussed below). This is non-trivial because, without Theorem \ref{thm:10},  it is uncertain whether or not adding Mixup  reduces the effect of  adversarial training in terms of the robustness. Moreover, Theorem \ref{thm:10} shows that  IAT\ further improves the robustness depending on the values of data-dependent radius $\epsilon^{\mix}_i$ when compared to standard adversarial training without Mixup. These are consistent with our experimental observations.  
\begin{theorem} \label{thm:10}
Let $\theta \in \Theta$ be a point such that $\nabla f_\theta(x_{i}+\hdelta_{i}) $ and $\nabla^{2} f_\theta(x_{i}+\hdelta_{i})$ exist for all $i=1,\dots,n$.  Assume that   $f_{\theta}(x_{i}+\hdelta_{i})=\nabla f_\theta(x_{i}+\hdelta_{i})\T (x_{i}+\hdelta_{i})$ and $\nabla^2 f_\theta(x_i+\hdelta_{i})=0$ for all $i \in \{1,\dots, n\}$. Suppose that $ \bE_{r\sim\cD_\hx}[r] =0$ and  $\| x_i +\hdelta_{i}\|_{2} \ge c_x \sqrt{d}$ for all $i \in \{1,\dots, n\}$.  Then, there exists a pair $(\varphi, \bar \varphi)$ such that $\lim_{z \rightarrow 0}\varphi(z)$, $\lim_{z \rightarrow 0} \bar \varphi(z)=0$, and
\begin{align*}
\Lcal_a \ge  \frac{1}{n}\sum_{i=1}^n \max_{\substack{ \\ \|\delta^{\mix}_{i}\|_2 \le \epsilon^{\mix}_i }} \ell(f_{\theta}(x_{i}+\hdelta_{i}+\delta^{\mix}_{i}), y_i)+E_{1}+E_{2},
\end{align*}
where $\epsilon^{\mix}_i := R_{i}c_x\EE_\lambda[(1- \lambda)]\sqrt{d}$,   $R_{i} :=|\cos(\nabla f_\theta(x_{i}+\hdelta_{i}),  x_{i}+\hdelta_{i})|$,   $E_{1}:=\bE_{\tlambda\sim\tD_\lambda} [(1-\tlambda )^2 \varphi_{}(1-\tlambda)]$, and $E_2:=\bE_{\tlambda\sim\tD_\lambda} [(1- \tlambda)]^{2}\bar \varphi(\bE_{\tlambda\sim\tD_\lambda} [(1- \tlambda)])$. 
\end{theorem}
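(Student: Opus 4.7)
The plan is to (i) reduce the double-index sum defining $\Lcal_a$ to a single-index expectation using the reparametrization lemma referenced in \ref{app:A}, (ii) Taylor-expand both the resulting integrand and the inner max on the right-hand side in the small quantity $(1-\tlambda)$, and (iii) compare the expansions term-by-term using the hypotheses $\theta\in\Theta$, $\nabla^2 f_\theta(\hx_i)=0$, $\EE_{r\sim\cD_{\hx}}[r]=0$ and $\|\hx_i\|_2\ge c_x\sqrt{d}$. Concretely, step (i) yields
$$
\Lcal_a \;=\; \frac{1}{n}\sum_{i=1}^n \EE_{\tlambda\sim\tD_\lambda}\EE_{r\sim\cD_{\hx}}\bigl[\ell\bigl(f_\theta(\tlambda \hx_i + (1-\tlambda) r),\, y_i\bigr)\bigr],
$$
which absorbs the $\lambda$-weight coming from the symmetric pairing $(i,j)\leftrightarrow(j,i)$ into the tilted distribution $\tD_\lambda$ and collapses the outer $j$-average into $r\sim\cD_{\hx}$.

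For step (ii) on the left-hand side, the condition $\nabla^2 f_\theta(\hx_i)=0$ makes the network exactly affine in a neighborhood of $\hx_i$, so $f_\theta(\tlambda \hx_i+(1-\tlambda)r)=f_\theta(\hx_i)+(1-\tlambda)\nabla f_\theta(\hx_i)^\top(r-\hx_i)$ and only the scalar loss needs expanding. Using $\EE_r[r]=0$ together with the identity $f_\theta(\hx_i)=\nabla f_\theta(\hx_i)^\top\hx_i$, the zeroth-order piece is $\ell(f_\theta(\hx_i),y_i)$, the first-order piece collapses to $-(1-\tlambda)\partial_q\ell\cdot f_\theta(\hx_i)$, and the second-order piece equals $\tfrac12(1-\tlambda)^2\partial_q^2\ell\cdot[\mathrm{Var}_r(\nabla f_\theta(\hx_i)^\top r)+f_\theta(\hx_i)^2]$. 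After integrating against $\tD_\lambda$, the cubic-and-higher remainder takes the form $\EE_\tlambda[(1-\tlambda)^2\varphi(1-\tlambda)]$ for a suitable $\varphi$ with $\varphi(z)\to 0$, matching $E_1$. On the right-hand side, local affineness combined with monotonicity of the binary cross-entropy produces the closed-form maximizer $\nabla f_\theta(\hx_i)^\top\delta^{\mix}_i=\operatorname{sgn}(\partial_q\ell)\,\epsilon^{\mix}_i\|\nabla f_\theta(\hx_i)\|$, and Taylor-expanding $\ell$ about $f_\theta(\hx_i)$ yields a series in $\epsilon^{\mix}_i=R_i c_x\EE_\tlambda[1-\tlambda]\sqrt d$ whose cubic remainder matches the $E_2$ format $(\EE_\tlambda[1-\tlambda])^2 \bar\varphi(\EE_\tlambda[1-\tlambda])$.

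Step (iii) is a term-wise comparison. On $\Theta$ the signs of $\partial_q\ell(f_\theta(\hx_i), y_i)=\sigma(f_\theta(\hx_i))-y_i$ and of $f_\theta(\hx_i)$ are opposite, so $-\partial_q\ell\cdot f_\theta(\hx_i)=|\partial_q\ell|\cdot|f_\theta(\hx_i)|$; combined with $|f_\theta(\hx_i)|=\|\nabla f_\theta(\hx_i)\|\|\hx_i\|R_i\ge \|\nabla f_\theta(\hx_i)\|c_x\sqrt d R_i$, this makes the left-hand first-order term dominate the right-hand first-order term by design of $\epsilon^{\mix}_i$. At second order, $\partial_q^2\ell\ge 0$ (logistic convexity), $\mathrm{Var}_r\ge 0$, the same $|f_\theta(\hx_i)|$ lower bound, and Jensen's inequality $\EE_\tlambda[(1-\tlambda)^2]\ge(\EE_\tlambda[1-\tlambda])^2$ combine to give LHS second-order $\ge$ RHS second-order, so the two finite Taylor segments satisfy the claimed inequality and the residuals collect into $E_1+E_2$. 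The main obstacle I anticipate is the bookkeeping of these cubic remainders: one needs to show that the LHS Taylor error, integrated against $\tD_\lambda$, is \emph{uniformly} of the form $\EE_\tlambda[(1-\tlambda)^2\varphi(1-\tlambda)]$ with a deterministic $\varphi$ vanishing at $0$, which requires uniform bounds on $\partial_q^3\ell$ (supplied by the logistic $h$) and on the third moments of $\nabla f_\theta(\hx_i)^\top(r-\hx_i)$, and then a careful interchange of the Taylor remainder with the $\tlambda$-expectation so that the powers of $(1-\tlambda)$ factor cleanly out of the bound.
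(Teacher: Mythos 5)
Your proposal is correct and follows essentially the same route as the paper's proof: the single-index reparametrization of $\Lcal_a$ over $\tlambda\sim\tD_\lambda$ and $r\sim\cD_\hx$ is exactly the paper's key lemma, and your sign argument on $\Theta$ for the first-order term, the bound $|f_\theta(\hx_i)|=\|\nabla f_\theta(\hx_i)\|_2\,\|\hx_i\|_2\,R_i\ge\|\nabla f_\theta(\hx_i)\|_2R_ic_x\sqrt d$ together with Jensen's inequality $\EE[(1-\tlambda)^2]\ge\EE[(1-\tlambda)]^2$ for the second-order term, and the Taylor comparison against the inner maximum all mirror the paper's lower bounds on $G_1$ and $G_2$ and its upper bound on the adversarial increment. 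The only loose point is the claim that $\nabla^2 f_\theta(\hx_i)=0$ makes $f_\theta$ exactly affine along the entire segment from $\hx_i$ to $r$ (a pointwise Hessian condition does not give that), but this is harmless here because the Taylor coefficients are evaluated at $\hx_i$ and any discrepancy along the segment is absorbed into the Peano remainder $E_1$, which is precisely how the paper argues via its expansion of $\psi_i(\alpha)$.
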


\begin{figure*}[t!]
    \centering
\begin{subfigure}[b]{0.45\textwidth}\centering
  \includegraphics[width=1.0\linewidth,height=0.6\linewidth]{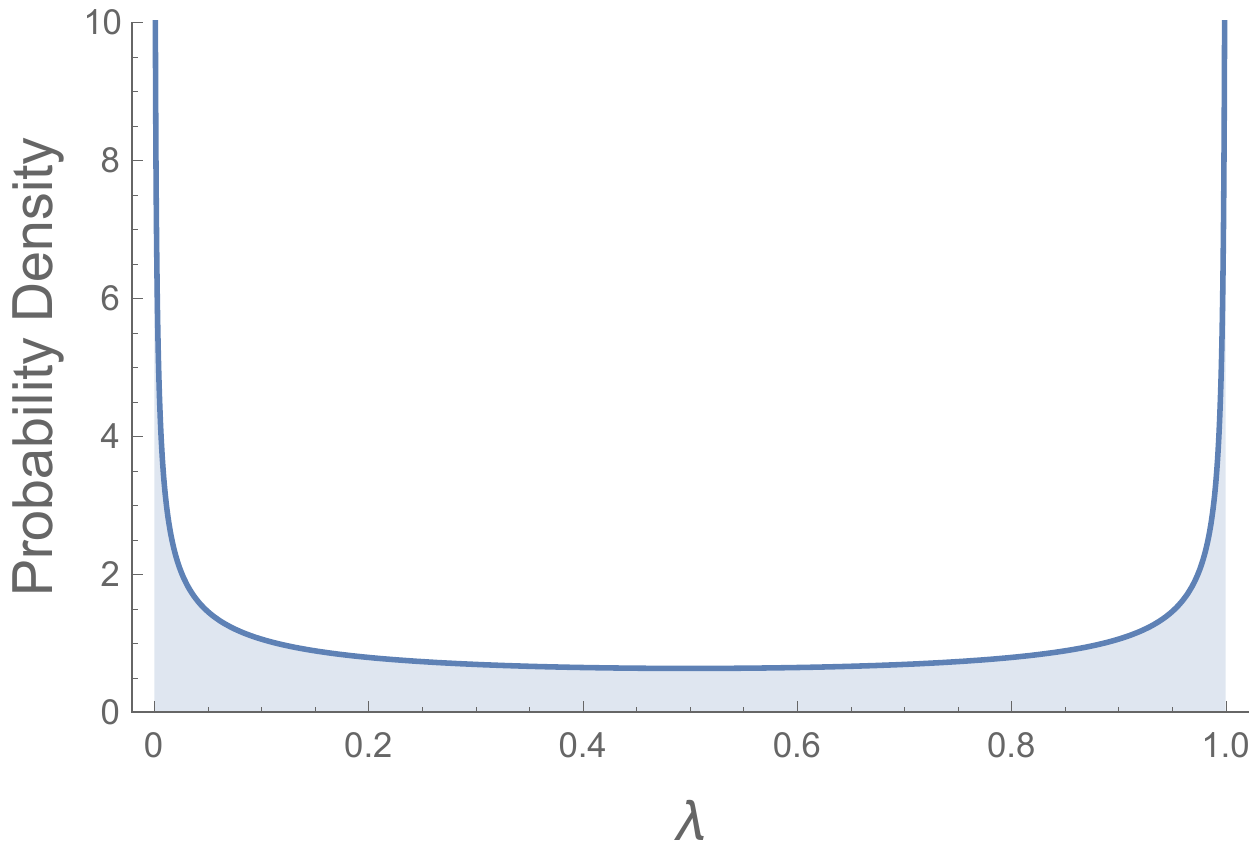}
  \caption{$\D_\lambda=  Beta(0.5, 0.5)$}
\end{subfigure}     
\begin{subfigure}[b]{0.45\textwidth}\centering
  \includegraphics[width=1.0\linewidth,height=0.6\linewidth]{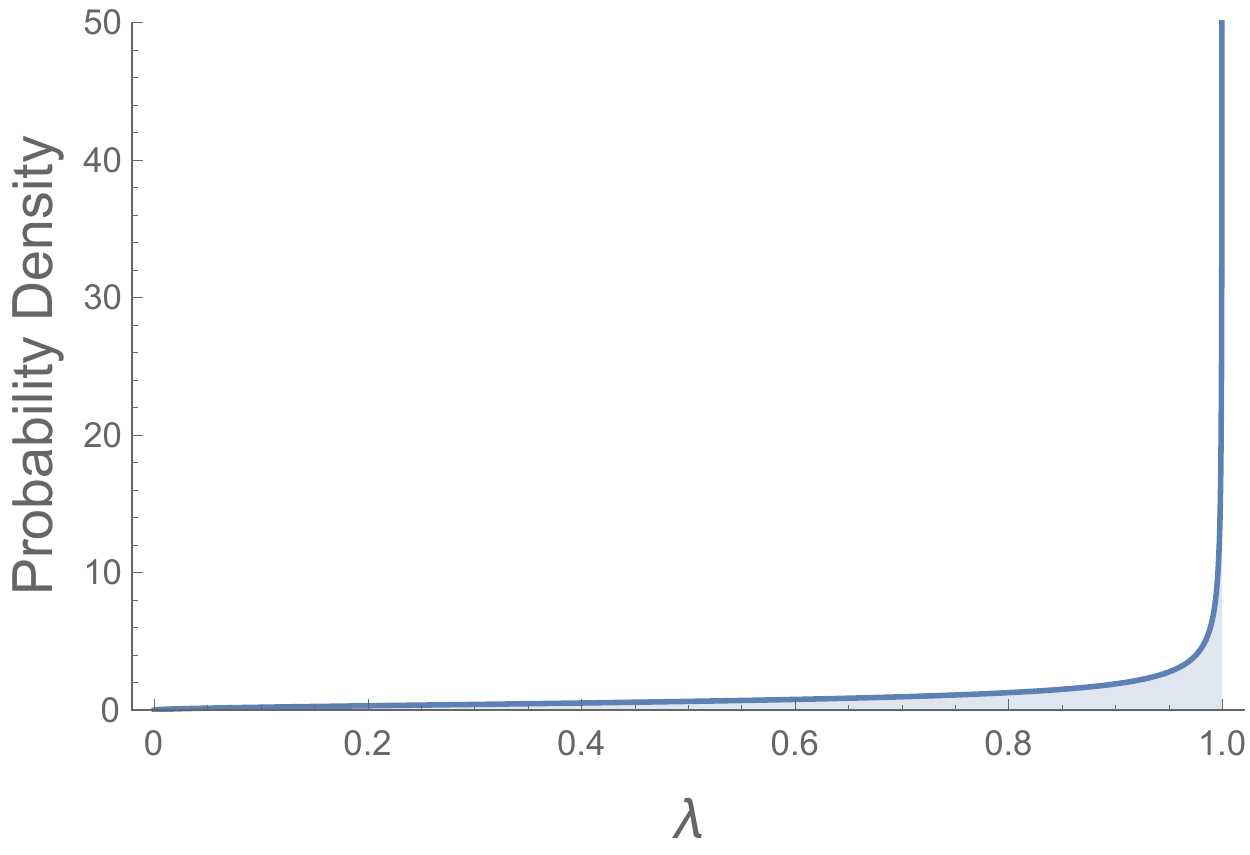}
  \caption{$\tD_\lambda =  Beta(1.5, 0.5)$}
\end{subfigure}     
    \caption{The comparison of the distribution $\D_\lambda$ of the Mixup coefficient $\lambda$ and the distribution $\tD_\lambda$ of $\tlambda$ in the error term $E$. }
    \label{fig:1}
\end{figure*}

The assumption of     $f_{\theta}(z)=\nabla f_\theta(z)\T z$ and $\nabla^2 f_\theta(z)=0$ in Theorem \ref{thm:10} is satisfied, for example, by linear models as well as deep neural networks  with ReLU activation function and max-pooling.
In Theorem \ref{thm:10}, the approximation error terms $E_{1}$ and $E_{2}$ are in the order of $(1-\tlambda)^3$ (since $\lim_{z \rightarrow 0}\varphi(z)$, $\lim_{z \rightarrow 0} \bar \varphi(z)=0$), and $\tlambda$ tends to be close to one since $\tlambda\sim\tD_\lambda$  where  $\tD_\lambda$ is the uniform mixture of two Beta distributions,  $\frac{\alpha}{\alpha+\beta}Beta(\alpha+1,\beta)+\frac{\beta}{\alpha+\beta}Beta(\beta+1,\alpha)$, given the distribution of   $\Dcal_\lambda = Beta(\alpha,\beta)$ for Mixup coefficient $\lambda$. For example, if the IAT algorithm uses the Beta distribution $\Dcal_\lambda =Beta(0.5, 0.5)$ for Mixup, then we have $\tD_\lambda=  Beta(1.5, 0.5)$, for which   $\tlambda \sim \tD_\lambda$ tends to be close to one as illustrated in Figure \ref{fig:1}. Therefore, the approximation error terms $E_{1}$ and $E_{2}$  tend to be close to zero.

\subsection{The effect of IAT on Generalization} \label{sec:3}

In this subsection, we mathematically analyze the  effect of IAT on generalization properties. We start  in Section \ref{sec:4} with th general setting with arbitrary   $h$ and $f_\theta$, and  prove a generalization bound for IAT loss.   In Section \ref{sec:5}, we then make assumptions on    $h$ and $f_\theta$ and    study the regularization effects of IAT.  

\subsubsection{Generalization Bounds} \label{sec:4}
The following theorem presents a generalization bound for the IAT loss $\frac{\Lcal_c +\Lcal_a}{2}$ --- the upper bound on the difference between the expected error on unseen data and the IAT loss, $\EE_{x,y}[\ell(f(x),y)]-\frac{\Lcal_c +\Lcal_a}{2}$:

\begin{theorem} \label{thm:4}
Let $\rho \ge 1$ be a real number and $\Fcal$  be a  set of maps $x\mapsto f(x)$. Assume that the function  $|\ell(q, y)-\ell(q', y)|\le \tau$ for any $q,q' \in\{f(x+\delta):f\in \Fcal, x\in \Xcal,\|\delta\|_{\rho}\le \epsilon\}$ and  $y \in \Ycal$. Then, for any $\delta>0$, with probability at least $1-\delta$ over  an i.i.d. draw of $n$ i.i.d. samples  $((x_i, y_i))_{i=1}^n$, the following holds: for all maps $ f\in\Fcal$,
there exists a function $\varphi: \RR \rightarrow \RR$ such that \begin{align} \label{eq:11}
&\EE_{x,y}[\ell(f(x),y)]-\frac{\Lcal_c +\Lcal_a}{2}
\\ \nonumber &\le  2\Rcal_{n}(\ell \circ \Fcal)+2\tau \sqrt{\frac{\ln(1/\delta)}{2n}} - \frac{Q(f)}{2}-\EE_{X}\left[\sum_{k=1}^3 \frac{G_k(\hat X,\cD_\hx)+G_k(X,\cD_x)}{2}\right]-E_{1},   
\end{align}
where 
 $\lim_{q\rightarrow 0}\varphi(q)=0$, $X:=(x_1,\dots,x_n)$, $\hat X:=(\hx_1,\dots,\hx_n)$,
$$
Q(f):=\frac{1}{n}\EE_{S}\left[\sum_{i=1}^{n}\left(\max_{\delta_i:\|\delta_i\|_{\rho}\le \epsilon}\ell(f(x_i+\delta_i),y_i)-\ell(f(x_i),y_i)\right) \right]\ge 0,
$$  
$$
G_1(\hat X,\cD_\hx):=\frac{\bE_{\lambda\sim\tD_\lambda}[1-\lambda]}{n}\sum_{i=1}^n(h'(f(\hx_i))-y_i)\nabla f(\hx_i)^\top \bE_{r\sim\cD_\hx}[r- \hx_i],
$$
$$
G_2(\hat X,\cD_\hx):=\frac{\bE_{\lambda\sim\tD_\lambda}[(1-\lambda)^2]}{2n}\sum_{i=1}^n h''(f(\hx_i))\nabla f(\hx_i)^\top \bE_{r\sim\cD_\hx}[(r-\hx_i)(r-\hx_i)^\top]\nabla f(\hx_i),
$$
$$
G_3(\hat X,\cD_\hx):=\frac{\bE_{\lambda\sim\tD_\lambda}[(1-\lambda)^2]}{2n}\sum_{i=1}^n (h'(f(\hx_i))-y_i)\bE_{r\sim\cD_\hx}[(r-\hx_i)\nabla^2 f(\hx_i)(r-\hx_i)^\top].
$$
\end{theorem}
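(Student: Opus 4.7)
My plan is to combine three ingredients: (i) a uniform Rademacher generalization bound for the ordinary clean empirical loss, (ii) the Appendix-A ``key lemma,'' which Taylor-expands the Mixup loss to third order in $(1-\lambda)$, and (iii) the identity $\ell(f(\hx_i),y_i)=\max_{\|\delta\|_\rho\le\epsilon}\ell(f(x_i+\delta),y_i)$, which turns the average of the adversarial empirical losses into the sample version of $Q(f)$. From (i), the bounded-differences hypothesis $|\ell(q,y)-\ell(q',y)|\le\tau$ feeds McDiarmid's inequality and, combined with the standard symmetrization, delivers with probability at least $1-\delta$, uniformly over $f\in\Fcal$,
\begin{equation*}
\EE_{x,y}[\ell(f(x),y)] - \tfrac{1}{n}\textstyle\sum_{i=1}^n \ell(f(x_i),y_i) \le 2\Rcal_n(\ell\circ\Fcal) + 2\tau\sqrt{\ln(1/\delta)/(2n)}.
\end{equation*}

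For (ii), with $\ell(q,y)=h(q)-yq$ and $h$ twice differentiable, I Taylor-expand $\ell(f(\lambda z_i+(1-\lambda)z_j),\lambda y_i+(1-\lambda)y_j)$ in powers of $(1-\lambda)$ around $\lambda=1$: the zeroth-order contribution is $\ell(f(z_i),y_i)$, the first-order piece is $(1-\lambda)(h'(f(z_i))-y_i)\nabla f(z_i)^\top(z_j-z_i)$, the second-order pieces pair $h''(f(z_i))$ and $\nabla^2 f(z_i)$ with quadratic forms in $z_j-z_i$, and the cubic Taylor remainder takes the form $(1-\lambda)^3\varphi(1-\lambda)$ with $\varphi\to0$. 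Averaging uniformly over $(i,j)\in\{1,\dots,n\}^2$ and over $\lambda\sim\Dcal_\lambda$, the Beta reweighting identity $\EE_{Beta(\alpha,\beta)}[(1-\lambda)\phi(\lambda)]=\tfrac{\beta}{\alpha+\beta}\EE_{Beta(\alpha,\beta+1)}[\phi(\lambda)]$ (and its symmetric counterpart) collapses the two label-weighted halves of the first-order terms into the single mixture distribution $\tD_\lambda$ that appears in every $G_k$, producing the two Appendix-A identities
\begin{align*}
\Lcal_c &= \tfrac{1}{n}\textstyle\sum_i \ell(f(x_i),y_i) + \textstyle\sum_{k=1}^3 G_k(X,\cD_x) + E_{1,c},\\
\Lcal_a &= \tfrac{1}{n}\textstyle\sum_i \ell(f(\hx_i),y_i) + \textstyle\sum_{k=1}^3 G_k(\hat X,\cD_\hx) + E_{1,a},
\end{align*}
where the cubic remainders combine into $E_1:=(E_{1,c}+E_{1,a})/2=\EE_{\tD_\lambda}[(1-\tlambda)^2\varphi(1-\tlambda)]$.

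Finally I combine the three ingredients. Substituting Step (ii) into the target, (iii) gives $\tfrac{1}{2n}\sum_i\ell(f(\hx_i),y_i)-\tfrac{1}{2n}\sum_i\ell(f(x_i),y_i)=\tfrac{1}{2n}\sum_i(\max_{\|\delta\|_\rho\le\epsilon}\ell-\ell)$, whose $S$-expectation is $Q(f)/2$ by definition; analogously $\EE_S$ of the sample $G_k(\cdot,\cdot)$ produces the $\EE_X[\,\cdot\,]$ appearing in the target. Adding the Rademacher bound from (i) then yields the claimed inequality. The main obstacle is this last transition: since the theorem is a high-probability statement over $S$ while $Q(f)$ and $\EE_X[\cdots]$ are $S$-expectations, a one-sided concentration is needed to argue that, uniformly over $f\in\Fcal$, the sample adversarial gap and the sample $G_k$'s stay above their expectations up to $O(n^{-1/2})$ slack, with that slack absorbed into the existing $2\Rcal_n(\ell\circ\Fcal)$ and $2\tau\sqrt{\ln(1/\delta)/(2n)}$ terms. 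A secondary bookkeeping item is confirming that the cubic Taylor remainder truly factors as $(1-\tlambda)^2\varphi(1-\tlambda)$ with $\lim_{z\to0}\varphi(z)=0$; this follows from the twice-differentiability of $h$ (immediate for the logistic and cross-entropy losses) together with the regularity of $f_\theta$ assumed in $\Fcal$.
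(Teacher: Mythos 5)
Your overall skeleton --- McDiarmid plus symmetrization for the Rademacher term, the Taylor decomposition of Lemma~\ref{lemma:1} applied to both $\Lcal_c$ and $\Lcal_a$, and the identification of the adversarial-minus-clean gap with $Q(f)$ --- is the same as the paper's. The genuine gap is exactly the ``main obstacle'' you flag at the end, and it is not a bookkeeping item that can be absorbed: in your route you concentrate only the clean empirical loss $\EE_{x,y}[\ell(f(x),y)]-\frac{1}{n}\sum_i\ell(f(x_i),y_i)$, which leaves you needing a \emph{uniform over $f\in\Fcal$}, one-sided deviation bound relating the sample quantities $\frac{1}{2n}\sum_i(\max_{\|\delta\|_\rho\le\epsilon}\ell-\ell)$ and the sample $G_k$'s to their $S$-expectations $Q(f)/2$ and $\EE_X[\cdots]$. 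These functionals involve $\nabla f$, $\nabla^2 f$, and $h'$, $h''$ composed with $f$; a uniform concentration statement for them is not implied by the hypothesis $|\ell(q,y)-\ell(q',y)|\le\tau$, would require its own complexity-control argument, and cannot simply be folded into the existing $2\Rcal_n(\ell\circ\Fcal)+2\tau\sqrt{\ln(1/\delta)/(2n)}$ terms without changing them.

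The paper avoids this entirely by choosing the McDiarmid target differently: it applies bounded differences to $\varphi(S)=\sup_{f\in\Fcal}\bigl\{\EE_{x,y}[\ell(f(x),y)]-\frac{\Lcal_c+\Lcal_a}{2}\bigr\}$ directly (each of $\Lcal_c,\Lcal_a$ has $n^2$ terms of which $2n-1$ change when one sample is swapped, giving oscillation at most $\tau(2n-1)/n^2\le 2\tau/n$). After that single concentration step, everything left to control is $\EE_S[\varphi(S)]$, inside which the Lemma~\ref{lemma:1} decomposition of $\Lcal_a$ and $\Lcal_c$ sits under an expectation over $S$ --- so $Q(f)$, $\EE_X[\sum_k\frac{G_k(\hat X,\cD_{\hx})+G_k(X,\cD_x)}{2}]$, and $E_1$ appear \emph{by definition} as those expectations, and no second concentration argument is needed; only the clean-loss residue is then symmetrized into $2\Rcal_n(\ell\circ\Fcal)$. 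If you restructure your step (i) to concentrate the full IAT objective rather than the clean loss alone, your argument closes. Two minor points: the remainder in Lemma~\ref{lemma:1} is the second-order Peano form $(1-\lambda)^2\varphi(1-\lambda)$ with $\varphi\to0$ (your final formula for $E_1$ is right, but calling it a ``cubic remainder of the form $(1-\lambda)^3\varphi(1-\lambda)$'' earlier is inconsistent with it); and the conjugacy/reweighting step you invoke to produce $\tD_\lambda$ matches the paper's Beta--Bernoulli exchange.
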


To understand this generalization bound further, we now compare it with a generalization bound for IAT without using Mixup on  terms $\Lcal_c$  and $\Lcal_a$. IAT without Mixup is the  adversarial training  along with the standard training, which minimizes the loss of
$$
\Lcal' = \frac{\Lcal_c '+\Lcal_a'}{2},
$$
where 
\begin{equation}
\Lcal_c'=\frac{1}{n}\sum_{i=1}^n \ell(f_{\theta}(x_{i}),y_{i}), \; \text{and }
\end{equation}
\begin{equation}
\Lcal_a'=\frac{1}{n}\sum_{i=1}^n \ell(f_{\theta}(x_i + \hdelta_{i}),y_{i}).
\end{equation}
The following theorem presents a generalization bound for IAT without Mixup on  terms $\Lcal_c$  and $\Lcal_a$:

\begin{theorem} \label{thm:6}
Let $\rho \ge 1$ be a real number and $\Fcal$  be a  set of maps $x\mapsto f(x)$. Assume that the function  $|\ell(q, y)-\ell(q', y)|\le \tau$ for any $q,q' \in\{f(x+\delta):f\in \Fcal, x\in \Xcal,\|\delta\|_{\rho}\le \epsilon\}$ and  $y \in \Ycal$. Then, for any $\delta>0$, with probability at least $1-\delta$ over  an i.i.d. draw of $n$ i.i.d.  samples  $((x_i, y_i))_{i=1}^n$, the following holds: for all maps $ f\in\Fcal$,
\begin{align} \label{eq:1}
\EE_{x,y}[\ell(f(x),y)]-\frac{\Lcal_c '+\Lcal_a'}{2}
 &\le2\Rcal_{n}(\ell \circ \Fcal)+2\tau \sqrt{\frac{\ln(1/\delta)}{2n}} - \frac{Q(f)}{2}.   
\end{align}
 
\end{theorem}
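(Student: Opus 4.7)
The plan is to rewrite the left-hand side as an average of two standard generalization gaps---one for the clean empirical loss $\Lcal_c'$ and one for the adversarial empirical loss $\Lcal_a'$---and then bound each by the standard Rademacher complexity plus a McDiarmid tail.

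First I would exploit that $\EE_S[\Lcal_c'(f)] = \EE_{x,y}[\ell(f(x),y)]$ and that, by the very definition of $Q(f)$, $\EE_S[\Lcal_a'(f)] = \EE_{x,y}[\ell(f(x),y)] + Q(f)$. A short algebraic manipulation then yields the identity
\begin{equation*}
\EE_{x,y}[\ell(f(x),y)]-\tfrac{\Lcal_c'+\Lcal_a'}{2}+\tfrac{Q(f)}{2}=\tfrac{1}{2}\bigl(\EE_{x,y}[\ell(f(x),y)]-\Lcal_c'(f)\bigr)+\tfrac{1}{2}\bigl(\EE_{x,y}[\max_{\|\delta\|_\rho\le\epsilon}\ell(f(x+\delta),y)]-\Lcal_a'(f)\bigr).
\end{equation*}
Thus it suffices to bound the supremum over $\Fcal$ of each of these two gaps by $2\Rcal_n(\ell\circ\Fcal)+2\tau\sqrt{\ln(1/\delta)/(2n)}$ with probability at least $1-\delta/2$ each, and then apply a union bound.

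For the clean gap, the textbook McDiarmid + symmetrization argument directly yields $\sup_f(\EE[\ell]-\Lcal_c')\le 2\Rcal_n(\ell\circ\Fcal)+\tau\sqrt{\ln(2/\delta)/(2n)}$, where the McDiarmid step uses that $|\ell(q,y)-\ell(q',y)|\le\tau$ on the range of $f(x)$. For the adversarial gap, I would apply the same recipe to the functional $(x,y)\mapsto\max_{\|\delta\|_\rho\le\epsilon}\ell(f(x+\delta),y)$: McDiarmid still goes through because the assumption explicitly bounds $|\ell(q,y)-\ell(q',y)|$ uniformly over the $\epsilon$-ball $\{f(x+\delta):\|\delta\|_\rho\le\epsilon\}$, and the expected adversarial gap is controlled by symmetrization combined with the key lemma in \ref{app:A}, which absorbs the inner maximization into the standard $\Rcal_n(\ell\circ\Fcal)$. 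Averaging the two bounds and using $\sqrt{\ln(2/\delta)}\le 2\sqrt{\ln(1/\delta)}$ (valid in the usable range of $\delta$) converts $\tau\sqrt{\ln(2/\delta)/(2n)}$ into $2\tau\sqrt{\ln(1/\delta)/(2n)}$, matching the stated bound.

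The main obstacle is the second half of the previous step: controlling the Rademacher complexity of the adversarial class $\{(x,y)\mapsto\max_{\|\delta\|_\rho\le\epsilon}\ell(f(x+\delta),y):f\in\Fcal\}$ by $\Rcal_n(\ell\circ\Fcal)$ itself, so that no enlarged complexity term appears on the right-hand side. This is exactly the content of the key lemma invoked in \ref{app:A}; its proof relies on the uniform $\tau$-bound on $\ell$ over the perturbation ball together with a symmetrization argument that treats the maximizing perturbation $\hdelta_i=\hdelta_i(f)$ as part of the predictor (so that the Rademacher contraction still reduces to $\ell\circ\Fcal$). Once this lemma is in hand, the remaining assembly---identity, McDiarmid, union bound, averaging---is routine.
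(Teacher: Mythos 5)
Your algebraic decomposition is correct: $\EE_S[\Lcal_a'] = \EE_{x,y}[\ell(f(x),y)] + Q(f)$ does hold, the identity splitting the left-hand side into a clean gap and an adversarial gap is valid, and the clean half goes through by the textbook McDiarmid-plus-symmetrization argument. The genuine gap is in the adversarial half. After symmetrizing $\EE_{x,y}[\max_{\|\delta\|_\rho\le\epsilon}\ell(f(x+\delta),y)]-\Lcal_a'$, what appears is the Rademacher complexity of the \emph{adversarial} loss class $\{(x,y)\mapsto\max_{\|\delta\|_\rho\le\epsilon}\ell(f(x+\delta),y):f\in\Fcal\}$, not $\Rcal_{n}(\ell\circ\Fcal)$. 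You claim the key lemma of \ref{app:A} reduces the former to the latter, but Lemma \ref{lemma:1} is a second-order Taylor expansion of the adversarial-mixup loss in the mixing coefficient $1-\lambda$; it says nothing about Rademacher complexities and is not used in the paper's proof of this theorem at all. Nor is the reduction true in general: the maximizer $\hdelta_i$ depends on $f$, so the perturbed inputs move with the hypothesis and the resulting class is not a restriction of $\ell\circ\Fcal$ to a fixed sample; the paper's own related-work discussion (on adversarially robust generalization) points to results showing the adversarial class can be strictly more complex. The uniform $\tau$-bound over the perturbation ball buys you the McDiarmid step, not a contraction back to $\Rcal_{n}(\ell\circ\Fcal)$.

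The paper's proof sidesteps this entirely. It applies McDiarmid once to the single quantity $\sup_{f\in\Fcal}\bigl(\EE_{x,y}[\ell(f(x),y)]-\tfrac{\Lcal_c'+\Lcal_a'}{2}\bigr)$ with bounded differences $2\tau/n$, which yields the $2\tau\sqrt{\ln(1/\delta)/(2n)}$ tail in one shot (no union bound, no constant adjustment via $\sqrt{\ln(2/\delta)}\le 2\sqrt{\ln(1/\delta)}$). Then, in bounding the expectation, it writes $\tfrac{\Lcal_c'+\Lcal_a'}{2}=\Lcal_c'+\tfrac{\Lcal_a'-\Lcal_c'}{2}$, identifies the expectation of the second summand with $Q(f)/2$, and symmetrizes only the clean empirical loss against the clean population risk --- the population adversarial risk never enters, so only $\Rcal_{n}(\ell\circ\Fcal)$ is needed. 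To repair your argument you would either have to adopt this one-shot decomposition, or supply an actual bound on the adversarial Rademacher complexity, which is a substantive additional result rather than the routine step you treat it as.
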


By comparing Theorems \ref{thm:4} and \ref{thm:6}, we can see that the  benefit of IAT with Mixup comes from the two mechanisms in terms of generalization. The first mechanism is based on the term of $\EE_{X}\left[\sum_{k=1}^3 \frac{G_k(\hat X,\cD_\hx)+G_k(X,\cD_x)}{2}\right]+E_{1}$. If this term is positive, then IAT with Mixup has a better generalization bound than that of IAT without Mixup (if we suppose that the Rademacher complexity term $\Rcal_{n}(\ell \circ \Fcal)$ is the same for both methods). The second mechanism is based on the model complexity term $\Rcal_{n}(\ell \circ \Fcal)$. As the model complexity term is bounded by the norms of trained weights (e.g., \citealp{bartlett2017spectral}), this term differs for different training schemes --- IAT with Mixup and IAT without Mixup. Accordingly, we study the regularization effects of IAT on the norms of weights in the next subsection.

\subsubsection{Regularization effects} \label{sec:5}

The generalization bounds in the previous subsection contain the model complexity term, which are controlled by the norms of the weights in the previous studies (e.g., \citealp{bartlett2017spectral}). Accordingly, we now discuss the regularization effects of IAT\ on the norms of weights. This subsection considers the models where    $f_{\theta}(x_{i}+\hdelta_{i})=\nabla f_\theta(x_{i}+\hdelta_{i})\T (x_{i}+\hdelta_{i})$ and $\nabla^2 f_\theta(x_i+\hdelta_{i})=0$ for $i=1,\dots,n$. This is satisfied by linear models as well as deep neural networks with ReLU activation functions and max-pooling. We  let $y \in \{0,1\}$ and $h(z)=\log(1+e^z)$, which makes the loss function $\ell$ to represent the binary cross-entropy loss. Define $g$ to be the logic function as $g(z)= \frac{e^{z}}{1+e^{z}}$. This definition implies that  $g(z) \in (0,1)$ for $z \in \RR$. 

The following theorem shows that the IAT term has the additional regularization effect on $\|\nabla f_\theta(\hx_i)\|_2$ and $\|\nabla f_\theta(\hx_i )\|^2_{\EE_{r}[(r-\hx_i )(r-\hx_i )\T]}$. This theorem explains the additional regularization effects of the IAT term on the norm of weights, since $\nabla f_\theta(\hx_i)=w$ for linear models and $\nabla f_\theta(\hx_i)= \|W^{H} \dot \sigma^{H} W^{H-1} \dot\sigma^{H-1}   \dots  \dot\sigma^{1}W^{1}\|$ for deep neural networks with ReLU and max-pooling. 
 
\begin{theorem}\label{thm:5}
 Assume that   $f_{\theta}(x_{i}+\hdelta_{i})=\nabla f_\theta(x_{i}+\hdelta_{i})\T (x_{i}+\hdelta_{i})$ and $\nabla^2 f_\theta(x_i+\hdelta_{i})=0$. Then, there exists a function $\varphi: \RR \rightarrow \RR$ such that 
\begin{align} 
\Lcal_a
= \frac{1}{n}\sum_{i=1}^n \ell(f_{\theta}(\hx_i ),y_{i})+ C_{1}\|\nabla f_\theta(\hx_i)\|_2+C_{2} \|\nabla f_\theta(\hx_i )\|^2_{\EE_{r}[(r-\hx_i )(r-\hx_i )\T]}+E_{1},
\end{align}
where $\lim_{q\rightarrow 0}\varphi(q)=0$ and 
$$
 C_{1}=\frac{\EE_\lambda [(1-\lambda)]}{n}\sum_{i=1}^n (y_i-g(f_\theta(\hx_i ))) \|\bE_{r\sim\cD_\hx}[r- \hx_i] \|_2 \cos(\nabla f_\theta(\hx_i),\bE_{r\sim\cD_\hx}[r- \hx_i]),
$$
$$
C_{2}=\frac{\EE_\lambda [(1-\lambda)^{2}]}{2n} \sum_{i=1}^n |g(f_\theta(\hx_i ))(1-g(f_\theta(\hx_i )))|.
$$

\end{theorem}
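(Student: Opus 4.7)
The plan is to recognize Theorem \ref{thm:5} as a direct specialization of the general Taylor decomposition of $\Lcal_a$ that already underlies Theorem \ref{thm:4}. I would therefore begin by invoking the key lemma in \ref{app:A}, which (after symmetrizing the double sum $\frac{1}{n^2}\sum_{i,j}\EE_{\lambda\sim\Dcal_\lambda}$ into $\frac{1}{n}\sum_i\EE_{\tlambda\sim\tD_\lambda}$ and Taylor-expanding $\ell(f_\theta(\cx_{i,j}(\lambda)),\ty_{i,j}(\lambda))$ in powers of $(1-\lambda)$ about $(\hx_i,y_i)$) yields
$$\Lcal_a = \frac{1}{n}\sum_{i=1}^n \ell(f_\theta(\hx_i),y_i) + G_1(\hat X,\cD_\hx) + G_2(\hat X,\cD_\hx) + G_3(\hat X,\cD_\hx) + E_1,$$
with $G_1, G_2, G_3$ exactly as in Theorem \ref{thm:4} and $E_1=\bE_{\tlambda\sim\tD_\lambda}[(1-\tlambda)^2\varphi(1-\tlambda)]$ for some $\varphi$ with $\lim_{q\to 0}\varphi(q)=0$.

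Next I would apply the two structural hypotheses. The assumption $\nabla^2 f_\theta(\hx_i)=0$ annihilates $G_3(\hat X,\cD_\hx)$ termwise, since $G_3$ is the unique member of $\{G_1,G_2,G_3\}$ whose integrand carries a factor of $\nabla^2 f_\theta(\hx_i)$. The companion assumption $f_\theta(\hx_i)=\nabla f_\theta(\hx_i)^\top \hx_i$ is consumed inside the key lemma: it forces the Taylor remainder of $f_\theta$ itself to begin at cubic order in the Mixup displacement $(1-\lambda)(\hx_j-\hx_i)$, so that all higher-order corrections in the expansion of $\ell(f_\theta(\cx_{i,j}),\ty_{i,j})$ can legitimately be absorbed into $E_1$. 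What remains of $\Lcal_a$ is therefore just $\frac{1}{n}\sum_i \ell(f_\theta(\hx_i),y_i)+G_1+G_2+E_1$.

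It then remains to match $G_1$ and $G_2$ to the claimed $C_1\|\nabla f_\theta(\hx_i)\|_2$ and $C_2\|\nabla f_\theta(\hx_i)\|^2_{\bE_r[(r-\hx_i)(r-\hx_i)^\top]}$ expressions. For the binary cross-entropy $h(z)=\log(1+e^z)$ one has $h'(z)=g(z)$ and $h''(z)=g(z)(1-g(z))\ge 0$, so each summand of $G_1$ takes the form $(g(f_\theta(\hx_i))-y_i)\nabla f_\theta(\hx_i)^\top \bE_r[r-\hx_i]$, which I would rewrite via the cosine-similarity identity $\nabla f_\theta(\hx_i)^\top \bE_r[r-\hx_i]=\|\nabla f_\theta(\hx_i)\|_2\|\bE_r[r-\hx_i]\|_2\cos(\nabla f_\theta(\hx_i),\bE_r[r-\hx_i])$; folding the sign $(g(f)-y)=-(y-g(f))$ and the summation $\frac{1}{n}\sum_i$ into the scalar prefactor produces precisely $C_1\|\nabla f_\theta(\hx_i)\|_2$ under the implicit per-$i$ pairing. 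For $G_2$, $h''\ge 0$ makes $|h''|=h''=g(1-g)$ redundant, and the quadratic-form identity $\nabla f_\theta(\hx_i)^\top M \nabla f_\theta(\hx_i)=\|\nabla f_\theta(\hx_i)\|_M^2$ with $M=\bE_r[(r-\hx_i)(r-\hx_i)^\top]$ delivers the $C_2$ term verbatim.

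The main obstacle I anticipate is purely bookkeeping rather than conceptual: aligning the outer sum $\frac{1}{n}\sum_i$ hidden inside the defined constants $C_1,C_2$ with the per-$i$ factors $\|\nabla f_\theta(\hx_i)\|_2$ and $\|\nabla f_\theta(\hx_i)\|^2_M$ that sit outside, and tracking the sign of the first-order coefficient so that the $(y_i-g(f_\theta(\hx_i)))$ appearing in $C_1$ is consistent with the $(g(f)-y)$ that falls naturally out of the Taylor expansion. The displayed equation should therefore be interpreted as $\frac{1}{n}\sum_i c_{1,i}\|\nabla f_\theta(\hx_i)\|_2+\frac{1}{n}\sum_i c_{2,i}\|\nabla f_\theta(\hx_i)\|^2_M$ inherited from $G_1+G_2$, with $E_1$ carried through unchanged from the key lemma.
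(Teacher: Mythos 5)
Your proposal follows essentially the same route as the paper: the paper's proof of Theorem \ref{thm:5} is exactly an application of Lemma \ref{lemma:1}, with the assumption $\nabla^2 f_\theta(\hx_i)=0$ annihilating $G_3$ and the cosine/weighted-norm identities $\nabla f_\theta(\hx_i)^\top \bE_r[r-\hx_i]=\|\nabla f_\theta(\hx_i)\|_2\,\|\bE_r[r-\hx_i]\|_2\cos(\nabla f_\theta(\hx_i),\bE_r[r-\hx_i])$ and $\nabla f_\theta(\hx_i)^\top M\,\nabla f_\theta(\hx_i)=\|\nabla f_\theta(\hx_i)\|_M^2$ converting $G_1$ and $G_2$ into the $C_1$ and $C_2$ terms. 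The sign bookkeeping you flag is real --- Lemma \ref{lemma:1} produces the factor $(h'(f_\theta(\hx_i))-y_i)$ while $C_1$ is stated with $(y_i-g(f_\theta(\hx_i)))$ --- but that discrepancy sits in the paper's own statement rather than in your argument, so your proof is correct and matches the paper's.
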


In Theorem \ref{thm:5}, $C_2$ is always strictly positive since $g(z) \in (0,1)$ for all $z\in\RR$. While $C_1$ can be negative in general,  the following proposition shows that $C_1$ will be also non-negative in the later phase of IAT training:

\begin{proposition} \label{prop:1}
If $\theta \in \Theta'$, then $C_{1} \ge 0$ where  
$
\Theta' = \{\theta \in \RR^{d} :  y_i (f_{\theta}(x_{i}+\delta_{i}(\theta))-\zeta_{i})+(y_i-1)(f_{\theta}(x_{i}+\delta_{i}(\theta))-\zeta_{i})\ge 0 \text{ for all } i = 1,\dots, n\},
$     
and 
$
\zeta_{i}=\nabla f_\theta(x_{i}+\delta_{i}(\theta))\T \bE_{r\sim\cD_\hx}[r].
$
\end{proposition}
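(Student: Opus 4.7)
The plan is to reduce $C_1\ge 0$ to a per-summand inequality and then to a sign analysis. Since the prefactor $\bE_\lambda[1-\lambda]/n$ is non-negative, it suffices to show that each summand is non-negative. First, I would rewrite the cosine factor via the identity $\|b\|_2\cos(a,b)=a^\top b/\|a\|_2$, obtaining
$$
C_1 = \frac{\bE_\lambda[1-\lambda]}{n}\sum_{i=1}^n \frac{(y_i-g(f_\theta(\hx_i)))\,\nabla f_\theta(\hx_i)^\top \bE_{r\sim\cD_\hx}[r-\hx_i]}{\|\nabla f_\theta(\hx_i)\|_2}.
$$
Since $\|\nabla f_\theta(\hx_i)\|_2>0$, the question reduces to the sign of the numerator in each term.

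Next, I would invoke the structural identity $f_\theta(\hx_i)=\nabla f_\theta(\hx_i)^\top \hx_i$, which is the same assumption used in Theorem~\ref{thm:5} and is satisfied by linear models as well as deep networks with ReLU and max-pooling. This lets me compute
$$
\nabla f_\theta(\hx_i)^\top \bE_{r\sim\cD_\hx}[r-\hx_i] = \nabla f_\theta(\hx_i)^\top \bE_{r\sim\cD_\hx}[r] - f_\theta(\hx_i) = \zeta_i - f_\theta(\hx_i),
$$
by the definition of $\zeta_i$. Substituting shows that the numerator in each summand is the scalar product $(y_i-g(f_\theta(\hx_i)))(\zeta_i-f_\theta(\hx_i))$, so the problem becomes an entirely scalar, per-$i$ sign check that depends on the model only through two quantities per data point.

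Finally, I would carry out a case analysis on $y_i\in\{0,1\}$. Because $g(\cdot)\in(0,1)$, the factor $y_i-g(f_\theta(\hx_i))$ has the same sign as $2y_i-1$. Meanwhile, the defining condition of $\Theta'$ simplifies via $y_i A+(y_i-1)A=(2y_i-1)A$ with $A=f_\theta(\hx_i)-\zeta_i$ to the constraint $(2y_i-1)(f_\theta(\hx_i)-\zeta_i)\ge 0$, which pins down the sign of $\zeta_i-f_\theta(\hx_i)$ relative to $2y_i-1$. Combining these two sign characterizations, case by case on $y_i\in\{0,1\}$, shows that both factors in the numerator have compatible signs, giving a non-negative product and hence $C_1\ge 0$. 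The main obstacle I expect is precisely this sign bookkeeping: the definitions of $C_1$, $\zeta_i$, and $\Theta'$ each involve subtractions in slightly different orders (e.g.\ $f-\zeta$ versus $\zeta-f$, and $y-g$ versus $g-y$), so the directions have to be tracked carefully; the structural identity $f_\theta(\hx_i)=\nabla f_\theta(\hx_i)^\top \hx_i$ is the bridge that reduces the cosine term to the same quantity $\zeta_i-f_\theta(\hx_i)$ that $\Theta'$ directly controls.
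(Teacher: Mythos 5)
Your plan is structurally the same as the paper's: reduce $C_1\ge 0$ to a per-summand sign check, convert the cosine factor into an inner product, use the structural identity $f_\theta(\hx_i)=\nabla f_\theta(\hx_i)^\top\hx_i$ to turn $\nabla f_\theta(\hx_i)^\top\bE_{r\sim\cD_\hx}[r-\hx_i]$ into $\zeta_i-f_\theta(\hx_i)$, and do a case split on $y_i\in\{0,1\}$ using $g\in(0,1)$ together with the $\Theta'$ constraint. The problem is the very last step, which you flag as ``sign bookkeeping'' but do not actually carry out --- and if you do carry it out, it goes the wrong way. Your summand is $(y_i-g(f_\theta(\hx_i)))\,(\zeta_i-f_\theta(\hx_i))/\|\nabla f_\theta(\hx_i)\|_2$. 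The $\Theta'$ condition reads $(2y_i-1)(f_\theta(\hx_i)-\zeta_i)\ge 0$, and $y_i-g(f_\theta(\hx_i))$ has the sign of $2y_i-1$; combining these gives $(y_i-g)(f_\theta(\hx_i)-\zeta_i)\ge 0$, i.e.\ $(y_i-g)(\zeta_i-f_\theta(\hx_i))\le 0$. So the argument as you set it up proves $C_1\le 0$, not $C_1\ge 0$; the two factors are \emph{not} sign-compatible in the direction you need.

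The discrepancy is not entirely your fault: it traces to a sign inconsistency inside the paper. Lemma~\ref{lemma:1}'s first-order term is $G_1=\frac{\bE[1-\lambda]}{n}\sum_i(h'(f(\hx_i))-y_i)\nabla f(\hx_i)^\top\bE_{r}[r-\hx_i]$, whose coefficient is $g-y_i$; the paper's proof of Proposition~\ref{prop:1} shows, by exactly your case split, that this $G_1=\frac{\bE[1-\lambda]}{n}\sum_i(y_i-g)(f_\theta(\hx_i)-\zeta_i)\ge 0$, and then concludes via the identification $G_1=C_1\|\nabla f_\theta(\hx_i)\|_2$. But the $C_1$ displayed in Theorem~\ref{thm:5} carries the opposite coefficient $y_i-g$, so under that display one actually has $C_1\|\nabla f_\theta(\hx_i)\|_2=-G_1$. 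Your route, which starts from the printed formula rather than from Lemma~\ref{lemma:1}, exposes this: the proposition holds as stated only after the coefficient in $C_1$ is corrected to $g(f_\theta(\hx_i))-y_i$ (equivalently, after defining $C_1$ through $G_1$). To repair your proof, either make that correction explicit or argue directly about $G_1$ from Lemma~\ref{lemma:1} as the paper does; with $C_1$ literally as printed, your sign analysis, done correctly, yields the reverse inequality.
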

Here, we have that
$$
\Theta '\supseteq \{\theta \in \RR^{d} :\text{ $\hat \one\{f_{\theta}(x_{i}+\hdelta_{i}) -\zeta_{i}\ge 0\}=$  $y_i$ for all $i=1,\dots,n$}\}. 
$$    
Therefore, the condition of $\theta \in \Theta'$ is satisfied when the model classifies  all labels correctly with margin $\zeta_i$ for  adversarial perturbations.
 As the training error (although not training loss) becomes  zero in finite time in many practical cases and margin increases via implicit bias of gradient descent after that \citep{lyu2020gradient}, the condition of $\theta \in \Theta'$ is  satisfied in   finite time in many practical cases.  Theorem \ref{thm:5} and Proposition \ref{prop:1} together shows that  IAT  can reduce the norms of weights when compared to  adversarial training.  

\citet{zhang2021does} showed that the standard mixup loss $\Lcal_c$ also has the regularization effect on the norm of weights and thus contribute to reduce the model complexity. Therefore, our result together with that of  the previous study \citep{zhang2021does}  shows the  benefit of IAT in terms of reducing the norm of weights to control the model complexity.  As the recent study only considers the standard Mixup without adversarial training, our result complements the recent study to understand IAT.

To validate this theoretical prediction, we   computed the norms of weights for a 6-layer fully-connected network with 512 hidden units trained on Fashion-MNIST and repot the results in Figure~\ref{fig:norms}.  On the one hand, adversarial training increased the  Frobenius norms across all the layers and increased the spectral norm of the majority of the layers.  On the other hand, IAT\ avoided or mitigated these increases in the norms of weights. This is consistent with our theoretical predictions and suggests that IAT  learns lower complexity classifiers than normal adversarial training.  

\begin{figure*}[t!]
    \centering
    \includegraphics[width=0.45\linewidth,height=0.4\linewidth,trim={1.5cm 0cm 1.5cm 0.5cm},clip]{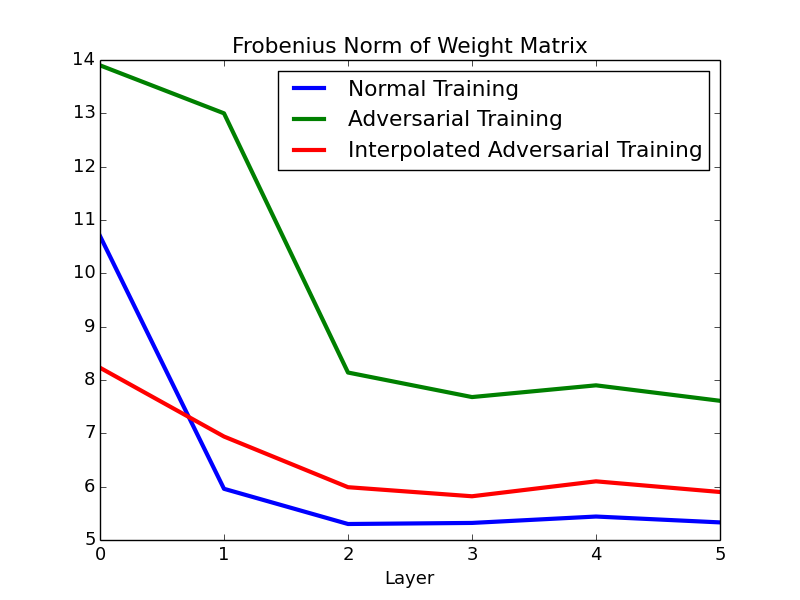}
    \includegraphics[width=0.45\linewidth,height=0.4\linewidth,trim={1.5cm 0cm 1.5cm 0.5cm},clip]{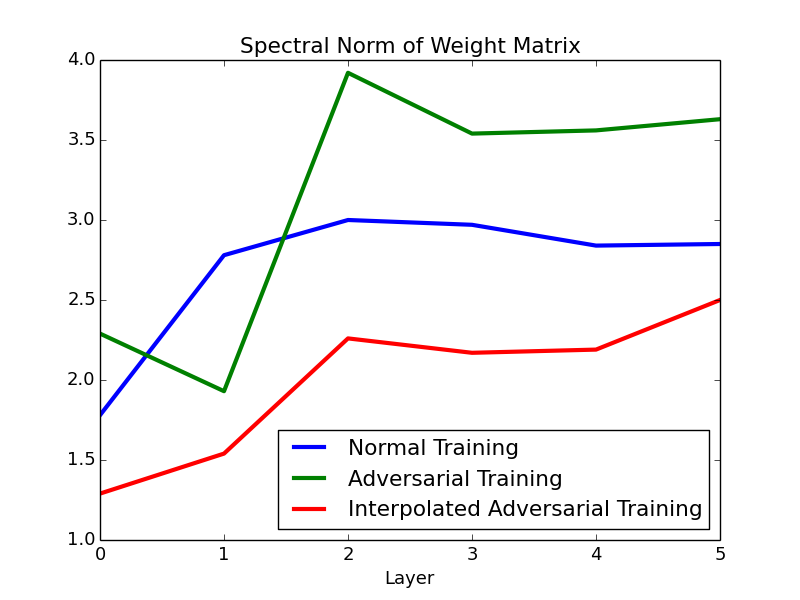}
    \caption{We analyzed the Frobenius and spectral norms of the weight matrices on a 6-layer network.  Generally Adversarial Training makes these norms larger, whereas \linearadv brings these norms closer to their values when doing normal training.  }
    \label{fig:norms}
\end{figure*}

To further understand why  adversarial training tends to increase the norms, consider the case of linear regression:
$$
L(\theta)= \frac{1}{2}\|Xw - Y\|^2_F.
$$ 
Then,  we have
$$
\nabla L(\theta) = X\T (Xw-Y). 
$$
Therefore, each step of  (stochastic) gradient descent only adds some vector in the column space of $X\T$ to $w$ as 
$$
w_{t+1} = w_{t} + v_t \quad \text{ where } v_t \in \Col(X\T). 
$$ 
Here, the solutions of the linear regression are any $w$ such that 
$$
w  = X^\dagger Y +v^\perp \quad \text{ where } v^\perp\in \Null(X).
$$ 
Thus, (stochastic) gradient descent  does not add any unnecessary element to $w$, implicitly minimizing the norm of the weights.  Accordingly, if we initialize $w$ as $w_{0} \in \Col(X\T) $, then we  achieve the minimum norm solution implicitly via (stochastic) gradient descent.  

In this context, we can easily see that by conducting adversarial training, we add vectors   $v^\perp\in \Null(X)$, breaking the implicit bias and increasing the norm of $w$. Similarly, in the case of deep neural networks, (stochastic) gradient descent  has the implicit bias that restrict the search space of $w$ and hence tend to minimize the norm without unnecessary elements \citep{lyu2020gradient,woodworth2020kernel,moroshko2020implicit}.   Thus, similarly to the case of linear models, adversarial training adds extra elements via the perturbation and tend to increases the norm of weights. Our results show that we can minimize this effect via the additional regularization effects of IAT to reduce overfitting for better   generalization behaviors.

\section{Conclusion}
\label{sec:conclusion}
Robustness to the adversarial examples is essential for ensuring that machine learning systems are secure and reliable.  However the most effective defense, adversarial training, has the effect of harming performance on the unperturbed data.  This has both the theoretical and  the practical significance.  As adversarial perturbations are imperceptible (or barely perceptible) to humans and humans are able to generalize extremely well, it is surprising that adversarial training reduces the model's ability to perform well on  unperturbed test data.  This degradation in the generalization is critically urgent to the practitioners whose systems are threatened by the adversarial attacks.  With current techniques those wishing to deploy machine learning systems need to consider a severe trade-off between performance on the unperturbed data and the robustness to the adversarial examples, which may mean that security and reliability will suffer in important applications.  Our work has addressed both of these issues. We proposed to address this by augmenting adversarial training with interpolation based training \citep{zhang2017mixup,verma2018manifold}.  We found that this substantially improves generalization on unperturbed data while preserving adversarial robustness. Our analysis showed why and how the proposed method can  improve  generalization and preserve adversarial robustness when compared to standard adversarial training.

\section*{Acknowledgements} 
The authors thank David Lopez-Paz for useful discussions and feedback.
We would also like to acknowledge Compute Canada for providing computing resources used in this work. The research of Kenji Kawaguchi is partially supported
by the Center of Mathematical Sciences and Applications at Harvard University.

\appendix

\allowdisplaybreaks

\section{Lemma on Adversarial-Mixup Loss} \label{app:A}
This appendix provides the key lemma, Lemmas \ref{lemma:1}, which is used to prove theorems in  \ref{app:B}.
\begin{lemma}\label{lemma:1}
 For any $f_{\theta}$, there exists a function $\varphi: \RR \rightarrow \RR$ such that 
\begin{equation*} 
\Lcal_a=\frac{1}{n}\sum_{i=1}^n \ell(f_{\theta}(\hx_i ),y_{i})+\sum_{i=1}^3 G_i+\bE_{\lambda\sim\tilde\cD_\lambda}[(1-\lambda)^2\varphi(1-\lambda)],
\end{equation*}
where $\lim_{q\rightarrow 0}\varphi(q)=0$ and 
$$
G_1=\frac{\bE_{\lambda\sim\tD_\lambda}[1-\lambda]}{n}\sum_{i=1}^n(h'(f_\theta(\hx_i))-y_i)\nabla f_\theta(\hx_i)^\top \bE_{r\sim\cD_\hx}[r- \hx_i],
$$
$$
G_2=\frac{\bE_{\lambda\sim\tD_\lambda}[(1-\lambda)^2]}{2n}\sum_{i=1}^n h''(f_\theta(\hx_i))\nabla f_\theta(\hx_i)^\top \bE_{r\sim\cD_\hx}[(r-\hx_i)(r-\hx_i)^\top]\nabla f_\theta(\hx_i),
$$
$$
G_3=\frac{\bE_{\lambda\sim\tD_\lambda}[(1-\lambda)^2]}{2n}\sum_{i=1}^n (h'(f_\theta(\hx_i))-y_i)\bE_{r\sim\cD_\hx}[(r-\hx_i)\nabla^2 f_\theta(\hx_i)(r-\hx_i)^\top].
$$
\end{lemma}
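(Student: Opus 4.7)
The plan is to prove the lemma via two main ingredients: a Beta-distribution reweighting that rewrites $\EE_{\lambda\sim\cD_\lambda}$ as $\EE_{\lambda\sim\tD_\lambda}$ at the level of the double sum, and a second-order Peano-form Taylor expansion of the integrand around the reference point $\hx_i$.

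For the reweighting step, I would split $1 = \lambda + (1-\lambda)$ inside the $\cD_\lambda$ expectation and apply the shifted-Beta moment identities $\EE_{\lambda\sim Beta(\alpha,\beta)}[\lambda F(\lambda)] = \frac{\alpha}{\alpha+\beta}\EE_{\lambda\sim Beta(\alpha+1,\beta)}[F(\lambda)]$ and $\EE_{\lambda\sim Beta(\alpha,\beta)}[(1-\lambda) F(\lambda)] = \frac{\beta}{\alpha+\beta}\EE_{\lambda\sim Beta(\alpha,\beta+1)}[F(\lambda)]$. For the summand involving $Beta(\alpha,\beta+1)$, the change of variable $\mu = 1-\lambda \sim Beta(\beta+1,\alpha)$ combined with the identities $\cx_{i,j}(1-\mu) = \cx_{j,i}(\mu)$ and $\ty_{i,j}(1-\mu) = \ty_{j,i}(\mu)$, together with relabeling $(i,j)\leftrightarrow(j,i)$ inside $\frac{1}{n^2}\sum_{i,j}$, converts the whole expression into $\frac{1}{n^2}\sum_{i,j}\EE_{\lambda\sim\tD_\lambda}\ell(f_\theta(\cx_{i,j}(\lambda)),\ty_{i,j}(\lambda))$ with $\tD_\lambda$ as defined in the paper.

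For the Taylor expansion, I would write $\cx_{i,j}(\lambda) = \hx_i + (1-\lambda)(\hx_j - \hx_i)$ and $\ty_{i,j}(\lambda) = y_i + (1-\lambda)(y_j - y_i)$, and Peano-expand $f_\theta(\cx_{i,j}(\lambda))$ to second order around $\hx_i$ using twice-differentiability. Using the loss structure $\ell(q,y) = h(q) - yq$ together with the twice-differentiability of $h$, I would Peano-expand $h$ around $f_\theta(\hx_i)$ to second order, substitute, and multiply out the bilinear $\ty_{i,j}(\lambda)\cdot f_\theta(\cx_{i,j}(\lambda))$ term. Collecting by powers of $(1-\lambda)$ yields a zeroth-order term $\ell(f_\theta(\hx_i),y_i)$, first- and second-order coefficients built from $h'$, $h''$, $\nabla f_\theta(\hx_i)^\top(\hx_j-\hx_i)$, and $(\hx_j-\hx_i)^\top\nabla^2 f_\theta(\hx_i)(\hx_j-\hx_i)$, and a Peano remainder $(1-\lambda)^2\varphi(1-\lambda)$ with $\varphi(q)\to 0$ as $q\to 0$. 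Taking $\EE_{\lambda\sim\tD_\lambda}$ produces the prefactors $\bE_{\lambda\sim\tD_\lambda}[1-\lambda]$ and $\bE_{\lambda\sim\tD_\lambda}[(1-\lambda)^2]$ in front of the first- and second-order coefficients respectively, while the Peano remainder integrates to $\bE_{\lambda\sim\tD_\lambda}[(1-\lambda)^2\varphi(1-\lambda)]$. Finally, averaging $\frac{1}{n}\sum_j$ against the empirical distribution $\cD_{\hx}$ converts $(\hx_j-\hx_i)$ into $\EE_{r\sim\cD_{\hx}}[r-\hx_i]$, the outer product into $\EE_{r\sim\cD_{\hx}}[(r-\hx_i)(r-\hx_i)^\top]$, and the quadratic form into the corresponding object in $G_3$, matching $G_1$, $G_2$, and $G_3$ exactly.

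The main obstacle I anticipate is handling the cross contributions induced by the $(1-\lambda)(y_j - y_i)$ factor in $\ty_{i,j}(\lambda)$: at first order these produce a term proportional to $(y_j - y_i) f_\theta(\hx_i)$, and at second order a term proportional to $(y_j - y_i)\nabla f_\theta(\hx_i)^\top(\hx_j - \hx_i)$, neither of which appears explicitly in the stated $G_k$ formulas. The argument will need to exploit the $(i,j)\leftrightarrow(j,i)$ symmetrization embedded in the construction of $\tD_\lambda$ together with careful bookkeeping under the double sum $\frac{1}{n^2}\sum_{i,j}$ to either cancel these contributions or fold them into the Beta-weighted coefficients so that the surviving expression is exactly $\sum_k G_k$ plus the Peano error. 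Verifying these cancellations while correctly tracking the Peano remainder and matching the $\bE_{\lambda\sim\tD_\lambda}[(1-\lambda)^k]$ moments is the technical crux of the proof.
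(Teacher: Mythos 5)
Your overall architecture (Beta reweighting to $\tD_\lambda$, then a second-order Peano expansion in $1-\lambda$ around $\hx_i$, then averaging over $j$ against $\cD_\hx$) matches the paper's proof, but your reweighting step is stated incorrectly, and that error is precisely what manufactures the ``technical crux'' you leave unresolved. Because $\ell(q,y)=h(q)-yq$ is affine in $y$, the split $1=\lambda+(1-\lambda)$ gives $\ell(f_\theta(\cx_{i,j}(\lambda)),\ty_{i,j}(\lambda))=\lambda\,\ell(f_\theta(\cx_{i,j}(\lambda)),y_i)+(1-\lambda)\,\ell(f_\theta(\cx_{i,j}(\lambda)),y_j)$; applying the shifted-Beta identities to each piece and then the $\mu=1-\lambda$, $(i,j)\leftrightarrow(j,i)$ relabeling to the second piece yields $\Lcal_a=\frac{1}{n^2}\sum_{i,j}\EE_{\lambda\sim\tD_\lambda}\,\ell(f_\theta(\cx_{i,j}(\lambda)),y_i)$ --- with the \emph{pure} label $y_i$, not the mixed label $\ty_{i,j}(\lambda)$ as you claim. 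The identity you assert, with $\ty_{i,j}(\lambda)$ retained under $\tD_\lambda$, is false in general: the difference between your expression and the correct one is $\frac{1}{n^2}\sum_{i,j}\EE_{\lambda\sim\tD_\lambda}[(1-\lambda)(y_i-y_j)f_\theta(\cx_{i,j}(\lambda))]$, which does not vanish in general because $\tD_\lambda$ is not symmetric under $\lambda\mapsto 1-\lambda$.

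Once the reweighting is done correctly, the difficulty you flag at the end dissolves rather than needing to be resolved: the integrand $\psi_i(\alpha)=\ell(f_\theta(\lambda\hx_i+(1-\lambda)r),y_i)$ carries only $y_i$, so there are no cross terms proportional to $(y_j-y_i)f_\theta(\hx_i)$ or to $(y_j-y_i)\nabla f_\theta(\hx_i)^\top(\hx_j-\hx_i)$ to cancel or to fold into the Beta moments; the zeroth-, first-, and second-order coefficients come out directly as $\ell(f_\theta(\hx_i),y_i)$, $(h'(f_\theta(\hx_i))-y_i)\nabla f_\theta(\hx_i)^\top(r-\hx_i)$, and the $h''$ and $\nabla^2 f_\theta$ terms of $G_2$ and $G_3$. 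Indeed, the appearance of $h'(f_\theta(\hx_i))-y_i$ (the derivative of $\ell(\cdot,y_i)$ with the unmixed label) in the stated $G_1$ is only consistent with the pure-label formulation. As written, your proposal both asserts a false intermediate identity and explicitly defers the handling of the resulting spurious cross terms, so the central step of the proof is missing.
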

\begin{proof}
Since $\ell(q,y)=h(q)-yq$, we have that$$\frac{1}{n}\sum_{i=1}^n \ell(f_{\theta}(\hx_i ),y_{i})=\frac{1}{n}\sum_{i=1}^n[h(f_\theta(\hx_i ))-y_if_\theta(\hx_i )],$$
and 
$$\Lcal_a=\frac{1}{n^2}\bE_{\lambda\sim Beta(\alpha,\beta)}\sum_{i,j=1}^{n}[h(f_\theta(\cx_{i,j}(\lambda)))-(\lambda y_i+(1-\lambda) y_j)f_\theta(\cx_{i,j}(\lambda))].$$
By expanding $(\lambda y_i+(1-\lambda) y_j)f_\theta(\cx_{i,j}(\lambda))$ and using $h(f_\theta(\cx_{i,j}(\lambda)))=\lambda h(f_\theta(\cx_{i,j}(\lambda)))+(1-\lambda)h(f_\theta(\cx_{i,j}(\lambda)))$, \begin{align*}
\Lcal_a&=\frac{1}{n^2}\bE_{\lambda\sim Beta(\alpha,\beta)}\sum_{i,j=1}^{n}\Big\{\lambda[ h(f_\theta(\cx_{i,j}(\lambda)))- y_if_\theta(\cx_{i,j}(\lambda))]
\\ &\hspace{110pt} +(1-\lambda)[h(f_\theta(\cx_{i,j}(\lambda)))- y_jf_\theta(\cx_{i,j}(\lambda))]\Big\}.
\end{align*}
Using the fact that $\EE_{B\sim Bern(\lambda)}[B]=\lambda$,
we have \begin{align*}
&\Lcal_a=\frac{1}{n^2}\bE_{\lambda\sim Beta(\alpha,\beta)}\bE_{B\sim Bern(\lambda)}\sum_{i,j=1}^{n}\Big\{B[h(f_\theta(\cx_{i,j}(\lambda)))- y_if_\theta(\cx_{i,j}(\lambda))]
\\ & \hspace{170pt} +(1-B)[h(f_\theta(\cx_{i,j}(\lambda)))-y_jf_\theta(\cx_{i,j}(\lambda))]\Big\}.
\end{align*}
Since $\lambda\sim Beta(\alpha,\beta), B|\lambda\sim Bern(\lambda)$, by conjugacy, we can exchange them to have 
$$B\sim Bern(\frac{\alpha}{\alpha+\beta}),\lambda\mid B\sim Beta(\alpha+B,\beta+1-B).$$
Thus, \begin{align*}
\Lcal_a&=\frac{1}{n^2}\sum_{i,j=1}^{n}\Big\{\frac{\alpha}{\alpha+\beta}\bE_{\lambda\sim Beta(\alpha+1,\beta)}[h(f_\theta(\cx_{i,j}(\lambda)))- y_if_\theta(\cx_{i,j}(\lambda))]
\\ & \hspace{60pt}+\frac{\beta}{\alpha+\beta}\bE_{\lambda\sim Beta(\alpha,\beta+1)}[h(f_\theta(\cx_{i,j}(\lambda)))-y_jf_\theta(\cx_{i,j}(\lambda))]\Big\}.
\end{align*}
Since $1-Beta(\alpha,\beta+1)$ and $Beta(\beta+1,\alpha)$ represent  the same distribution and $\cx_{ij}(1-\lambda)=\cx_{ji}(\lambda)$, we have
\begin{align*}
&\sum_{i,j}\bE_{\lambda\sim Beta(\alpha,\beta+1)}[h(f_\theta(\cx_{i,j}(\lambda)))-y_jf_\theta(\cx_{i,j}(\lambda))]\\
&=\sum_{i,j}\bE_{\lambda\sim Beta(\beta+1,\alpha)}[h(f_\theta(\cx_{i,j}(\lambda)))-y_if_\theta(\cx_{i,j}(\lambda))].
\end{align*}
By defining $\tD_\lambda=\frac{\alpha}{\alpha+\beta}Beta(\alpha+1,\beta)+\frac{\beta}{\alpha+\beta}Beta(\beta+1,\alpha)$,
\begin{align*} 
\Lcal_a&=\frac{1}{n^{2}}\sum_{i,j=1}^n \bE_{\lambda\sim \tD_\lambda} [h(f_\theta(\cx_{i,j}(\lambda)))-y_if_\theta(\cx_{i,j}(\lambda))]\notag 
\\ &=\frac{1}{n^{2}}\sum_{i,j=1}^n \EE_{\lambda\sim \tD_\lambda}\ell(f_{\theta}(\lambda\hx_i+(1-\lambda) \hx_{j}),y_{i})
\end{align*}
By defining  $\cD_\hx$ to be the empirical distribution induced by perturbed training samples $(\hx_j)_{j=1}^n$, 
\begin{align*} 
\Lcal_a&=\frac{1}{n}\sum_{i=1}^n \EE_{\lambda\sim \tD_\lambda}\EE_{r \sim\cD_\hx}\ell(f_{\theta}(\lambda\hx_i+(1-\lambda)r),y_{i}) 
\end{align*}
Let 
$\check x_i=\lambda\hx_{i}+(1-\lambda) r$,  $\alpha=1-\lambda$, and $\psi_{i}(\alpha)=\ell(f_{\theta}(\cx_i),y_{i})$. Then,  using the definition of the twice-differentiability of function $\psi_{i}$, 
\begin{align*} 
\ell(f_{\theta}(\cx_i),y_{i})=\psi_{i}(\alpha)=\psi_{i}(0)+\psi_{i}'(0)\alpha +\frac{1}{2}  \psi_{i}''(0) \alpha^2 + \alpha^2 \varphi_{i}(\alpha),
\end{align*}
where $\lim_{z \rightarrow 0}\varphi_i(z)=0$. Therefore, 

\begin{align} \label{eq:7}
\Lcal_a&=\frac{1}{n} \EE_{\lambda\sim \tD_\lambda}\EE_{r \sim\cD_\hx}\sum_{i=1}^n [\psi_{i}(0)+\psi_{i}'(0)\alpha +\frac{1}{2}  \psi_{i}''(0) \alpha^2 ]+ \EE_{\lambda\sim \tD_\lambda}[(1-\lambda)^{2}\varphi_{}(1-\lambda)], 
\end{align}
where  $\varphi(\alpha)=\frac{1}{n}\sum_{i=1}^n\varphi_i(\alpha)$.
By linearity and chain rule, \begin{align*}
\psi_{i}'(\alpha) &= h'(f_\theta(\check x_i)) \frac{\partial f_\theta(\check x_i)}{\partial \check x_i} \frac{\partial \check x_i}{\partial \alpha} - y_{i} \frac{\partial f_\theta(\check x_i)}{\partial \check x_i} \frac{\partial \check x_i}{\partial \alpha}
\\ & = h'(f_\theta(\check x_i)) \frac{\partial f_\theta(\check x_i)}{\partial \check x_i} (r - \hx_{i})-y_{i} \frac{\partial f_\theta(\check x_i)}{\partial \check x_i} (r - \hx_{i}) 
\end{align*}
where we used  $\frac{\partial \check x_i}{\partial \alpha}=(r -\hx_{i})$. Moreover,
\begin{align*}
\frac{\partial}{\partial \alpha}\frac{\partial f_\theta(\check x_i)}{\partial \check x_i} (r- \hx_{i})&=\frac{\partial}{\partial \alpha}(r-\hx_{i})\T \left[\frac{\partial f_\theta(\check x_i)}{\partial \check x_i} \right]\T \\ &=(r- \hx_{i})\T\nabla^2 f_\theta(\check x_i)\frac{\partial \check x_i}{\partial \alpha}
\\ & =(r- \hx_{i})\T\nabla^2 f_\theta(\check x_i)(r-\hx_{i}). 
\end{align*} 
Therefore,      
\begin{align*}
\psi_{i}''(\alpha)=&h'(f_\theta(\check x_i)) (r- \hx_{i})\T\nabla^2 f_\theta(\check x_i)(r- \hx_{i})\\
&+h''(f_\theta(\check x_i))[\frac{\partial f_\theta(\check x_i)}{\partial \check x_i} (r- \hx_{i})]^{2}- y_{i} (r- \hx_{i})\T\nabla^2 f_\theta(\check x_i)(r- \hx_{i}).
\end{align*}
By setting $\alpha=0$,
$$
\psi_{i}'(0)=h'(f_\theta( \hx_i)) \nabla f_\theta(\hx_i)\T   (r- \hx_{i})-y_{i} \nabla f_\theta(\hx_i)\T  (r- \hx_{i})=(h'(f_\theta( \hx_i))-y_{i}) \nabla f_\theta(\hx_i)\T   (r- \hx_{i}),
$$
and
\begin{align*}
\psi_{i}''(0) =&h'(f_\theta( \hx_i)) (r- \hx_{i})\T\nabla^2 f_\theta(\hx_i)(r- \hx_{i})
\\ & +h''(f_\theta( \hx_i))[ \nabla f_\theta(\hx_i)\T    (r- \hx_{i})]^{2}
\\ & - y_{i} (r- \hx_{i})\T\nabla^2 f_\theta(\hx_i)(r- \hx_{i})
\\  =&h''(f_\theta( \hx_i)) \nabla f_\theta(\hx_i)\T    (r- \hx_{i})    (r- \hx_{i})\T \nabla f_\theta(\hx_i)
\\ & +(h'(f_\theta( \hx_i))-y_{i}) (r- \hx_{i})\T\nabla^2 f_\theta(\hx_i)(r- \hx_{i}). 
\end{align*}
By substituting these into \eqref{eq:7}, we obtain the statement of this lemma.
\end{proof}

\section{Proofs} \label{app:B}
Using Lemmas \ref{lemma:1} proven in \ref{app:A}, this appendix provides the complete proofs of Theorem \ref{thm:10}, Theorem  \ref{thm:4}, Theorem \ref{thm:6}, Theorem \ref{thm:5}, and Proposition \ref{prop:1}.

\noindent \textbf{Proof of Theorem \ref{thm:10}}. 
Let $\hx_i =x_i + \hat \delta_{i}$. From the assumption, we have $f_{\theta}(\hx_i )=\nabla f_\theta(\hx_i )\T \hx_i $ and $\nabla^2 f_\theta(\hx_i )=0$. Since $h(z)=\log(1+e^z)$, we have $h'(z)=\frac{e^z}{1+e^z}=g(z) \ge 0$ and $h''(z)=\frac{e^z}{(1+e^z)^{2}}=g(z)(1-g(z)) \ge 0$.
By substituting these into the equation of Lemma  \ref{lemma:1} with $ \bE_{r\sim\cD_\hx}[r] =0 $,
\begin{align}\label{eq:14}
\Lcal_a= \frac{1}{n}\sum_{i=1}^n \ell(f_{\theta}(\hx_i ),y_{i})+ G_{1}+ G_{2} +E_{1},
\end{align} 
where 
$$
G_{1}= \frac{\EE_\lambda [(1-\lambda)]}{n}\sum_{i=1}^n (y_i-g(f_\theta(\hx_i )))f_\theta(\hx_i )
$$
\begin{align*}
G_{2}&= \frac{\EE_\lambda [(1-\lambda)^{2}]}{2n} \sum_{i=1}^n |g(f_\theta(\hx_i ))(1-g(f_\theta(\hx_i )))|\nabla f_\theta(\hx_i )\T  \EE_{r}[(r-\hx_i )(r-\hx_i )\T]\nabla f_\theta(\hx_i )
 \\ & \ge\frac{\EE_\lambda [(1-\lambda)]^{2}}{2n} \sum_{i=1}^n |g(f_\theta(\hx_i ))(1-g(f_\theta(\hx_i )))|\nabla f_\theta(\hx_i )\T  \EE_{r}[(r-\hx_i )(r-\hx_i )\T]\nabla f_\theta(\hx_i )  
\end{align*}
where we used  $\EE[z^2]=\EE[z]^2 + \Var(z)\ge E[z]^2$ and $  \nabla f_\theta(\hx_i )\T  \EE_{r}[(r-\hx_i )(r-\hx_i )\T]\nabla f_\theta(\hx_i ) \ge 0$. 
Since $\EE_{r}[(r-\hx_i )(r-\hx_i )\T]=\EE_{r}[rr\T-r\hx_i \T -\hx_i  r\T + \hx_i \hx_i \T]=\EE_{r}[rr\T]+ \hx_i \hx_i \T$ where $\EE_{r}[rr\T]$ is positive semidefinite,
 
\begin{align*}
G_{2}&\ge\frac{\EE_\lambda [(1-\lambda)]^{2}}{2n} \sum_{i=1}^n |g(f_\theta(\hx_i ))(1-g(f_\theta(\hx_i )))|\nabla f_\theta(\hx_i )\T  (\EE_{r}[rr\T]+ \hx_i \hx_i \T)\nabla f_\theta(\hx_i ).
\\ & \ge\frac{\EE_\lambda [(1-\lambda)]^{2}}{2n} \sum_{i=1}^n |g(f_\theta(\hx_i ))(1-g(f_\theta(\hx_i )))|(\nabla f_\theta(\hx_i )\T  \hx_i )^{2} 
\\ & =\frac{\EE_\lambda [(1-\lambda)]^{2}}{2n} \sum_{i=1}^n |g(f_\theta(\hx_i ))(1-g(f_\theta(\hx_i )))| \|\nabla f_\theta(\hx_i )\|_2^2 \|\hx_i \|_2 ^{2} (\cos(\nabla f_\theta(\hx_i ),  \hx_i ))^2 
\\ & \ge \frac{1}{2n} \sum_{i=1}^n |g(f_\theta(\hx_i ))(1-g(f_\theta(\hx_i )))| \|\nabla f_\theta(\hx_i )\|_2^2 R_i^{2}  c_x^2 \EE_\lambda [(1-\lambda)]^{2}d
\end{align*}
Now we bound $G_{1}= \frac{\EE_\lambda [(1-\lambda)]}{n}\sum_{i=1}^n (y_i-g(f_\theta(\hx_i )))f_\theta(\hx_i )$ by using  $\theta \in \Theta$. Since $\theta \in \Theta$, we have  $y_i f_{\theta}(\hx_i ) +(y_i-1)f_{\theta}(\hx_i )\ge 0$, which implies that $f_{\theta}(\hx_i )\ge 0$ if $y_i =1$ and $f_{\theta}(\hx_i )\le 0$ if $y_i=0$. Thus, if $y_i =1$,
$$
(y_i-g(f_{\theta}(\hx_i )))(f_{\theta}(\hx_i ))=(1-g(f_{\theta}(\hx_i )))( f_{\theta}(\hx_i )) \ge 0,
$$
since $(f_{\theta}(\hx_i )) \ge 0$ and $(1-g(f_{\theta}(\hx_i ))) \ge 0$ due to  $g(f_{\theta}(\hx_i )) \in (0,1)$.  If $y_i =0$,
$$
(y_i-g(f_{\theta}(\hx_i )))(f_{\theta}(\hx_i ))=-g(f_{\theta}(\hx_i ))(f_{\theta}(\hx_i )) \ge 0,
$$
since $(f_{\theta}(\hx_i ))\le 0$ and $-g(f_{\theta}(\hx_i )) <0$. Therefore, for all $i=1,\dots,n$, 
$$
(y_i-g(f_{\theta}(\hx_i )))(f_{\theta}(\hx_i )) \ge 0,
$$  
which implies that, since $\EE_\lambda [(1-\lambda)]\ge 0$,
\begin{align*}
G_{1}&= \frac{\EE_\lambda [(1-\lambda)]}{n}\sum_{i=1}^n |y_i-g(f_\theta(\hx_i ))||f_\theta(\hx_i )|
 \\ & = \frac{\EE_\lambda [(1-\lambda)]}{n}\sum_{i=1}^n |g(f_\theta(\hx_i ))-y_i| \|\nabla f_\theta(\hx_i )\|_2 \|\hx_i \|_2  |\cos(\nabla f_\theta(\hx_i ),  \hx_i )| 
 \\ & \ge \frac{1}{n}\sum_{i=1}^n |g(f_\theta(\hx_i ))-y_i| \|\nabla f_\theta(\hx_i )\|_2R_{i}c_x\EE_\lambda [(1-\lambda)] \sqrt{d}  
\end{align*}
By substituting these lower bounds of $G_{1}$ and $G_{2}$ into \eqref{eq:14}, we obtain 
\begin{align} 
\label{eq:15} & \Lcal_a - \frac{1}{n}\sum_{i=1}^n \ell(f_{\theta}(\hx_i ),y_{i})
\\ \nonumber &\ge  \frac{1}{n}\sum_{i=1}^n |g(f_{\theta}(\hx_i ))-y_i|\|\nabla f_\theta(\hx_i )\|_{2} \epsilon^{\mix}_i  + \frac{1}{2n}\sum_{i=1}^n |h''(f_\theta(\hx_i ))|\|\nabla f_\theta(\hx_i )\|_{2}^2 (\epsilon^{\mix}_i)^2  +E_{1}
\end{align}
On the other hand,    for any $z_1,\dots, z_n$, there exist  functions $\varphi'_i$ such that $\lim_{z \rightarrow 0}\varphi'_{i}(z)=0$,
and\begin{align}
\nonumber& \frac{1}{n} \sum_{i=1}^n \max_{\|\delta_{i}\|_2 \le \epsilon_{i} }\ell(f_{\theta}(z_{i}+\delta_{i}),y_{i})-\frac{1}{n} \sum_{i=1}^n \ell(f_{\theta}(z_{i}),y_{i}) 
\\ \nonumber &\le \scalebox{0.9}{$\displaystyle    \frac{1}{n} \sum_{i=1}^n|g(f_\theta(z_{i}))-y_i|\| \nabla f_\theta(z_{i}) \|_2 \epsilon_{i}  + \frac{1}{2n} \sum_{i=1}^{n}|h''(f_\theta(z_{i}))| \| \nabla f_\theta(z_{i})\|_2^2 \epsilon_i^{2} +\frac{1}{n} \sum_{i=1}^n\max_{\|\delta_{i}\|_2 \le \epsilon_{i}}\|\delta_{i}\|_2^{2}\varphi'_{i}(\delta_{i})$} 
\\ \label{eq:16} & \le    \frac{1}{n} \sum_{i=1}^n|g(f_\theta(z_{i}))-y_i|\| \nabla f_\theta(z_{i}) \|_2 \epsilon_{i}  + \frac{1}{2n} \sum_{i=1}^{n}|h''(f_\theta(z_i))| \| \nabla f_\theta(z_i)\|_2^2 \epsilon_i^{2}+\frac{1}{n} \sum_{i=1}^n\epsilon_{i}^2 \varphi''_{i}(  \epsilon_{i})
\end{align}
where $\varphi''_{i}(  \epsilon_{i})=\max_{\|\delta_{i}\|_2 \le \epsilon_{i} }\varphi'_{i}( \delta_{i})$. Note that  $\lim_{q \rightarrow 0}\varphi''_{i}(q)=0$. By combining \eqref{eq:15} and \eqref{eq:16}, 
$$
\Lcal_a \ge \frac{1}{n}\sum_{i=1}^n \max_{\substack{ \\ \|\delta^{\mix}_{i}\|_2 \le \epsilon^{\mix}_i }} \ell(f_{\theta}(x_{i}+\hdelta_{i}+\delta^{\mix}_{i}), y_i)+ \EE_\lambda [(1-\lambda )^2 \varphi_{}(1-\lambda)]-\frac{1}{n} \sum_{i=1}^n  (\epsilon^{\mix}_i)^2  \varphi''(\epsilon^{\mix}_i),
$$
where $\epsilon^{\mix}_i= R_{i}c_x\EE_\lambda[(1- \lambda)] \sqrt{d}$. Define $\bar \varphi(q)=-\frac{1}{n} \sum_{i=1}^n (R_{i}c_x \sqrt{d})^2 \varphi''_{i}(R_{i}c_x\sqrt{d}q)$. Note that   $\lim_{q \rightarrow 0} \bar \varphi(q)=0$. Then,
since $
\frac{1}{n} \sum_{i=1}^n (\epsilon^{\mix}_i)^2  \varphi''(\epsilon^{\mix}_i)
 =-E_{2},
$
\begin{align*}
\Lcal_a\ge  \frac{1}{n}\sum_{i=1}^n \max_{\substack{ \\ \|\delta^{\mix}_{i}\|_2 \le \epsilon^{\mix}_i }} \ell(f_{\theta}(x_{i}+\hdelta_{i}+\delta^{\mix}_{i}), y_i)+ E_{1}+E_{2}.
\end{align*}
where    $\lim_{z \rightarrow 0} \bar \varphi(z)=0$ and $\lim_{z \rightarrow 0} \varphi(z)=0$. 

\qed

\noindent \textbf{Proof of Theorem \ref{thm:4}}.  Let $S=((x_i, y_i))_{i=1}^n$ and $S'=((x_i', y_i'))_{i=1}^n$. Define 
\begin{align}
\varphi(S)= \sup_{f \in\Fcal} \EE_{x,y}[\ell(f(x),y)]-\frac{\Lcal_c +\Lcal_a}{2}.
\end{align} 
To apply McDiarmid's inequality to $\varphi(S)$, we compute an upper bound on $|\varphi(S)-\varphi(S')|$ where  $S$ and $S'$ be two test datasets differing by exactly one point of an arbitrary index $i_{0}$; i.e.,  $S_i= S'_i$ for all $i\neq i_{0}$ and $S_{i_{0}} \neq S'_{i_{0}}$. Then,
\begin{align}
\varphi(S')-\varphi(S) \le \frac{\tau (2n-1)}{n^2}\le \frac{2\tau}{n},
\end{align}
where we use the fact that both $\Lcal_c$ and $\Lcal_a$ have $n^2$ terms and $2n-1$ terms differ for $S$ and $S'$, each of which is bounded by the constant $\tau$. Similarly, $\varphi(S)-\varphi(S')\le \frac{2\tau}{n}$. Thus, by McDiarmid's inequality, for any $\delta>0$, with probability at least $1-\delta$,
\begin{align} \label{eq:12}
\varphi(S) \le  \EE_{S}[\varphi(S)] + 2\tau \sqrt{\frac{\ln(1/\delta)}{2n}}.
\end{align}
Moreover, by using Lemma \ref{lemma:1}, there exists functions $\varphi'$ and $\varphi''$ such that \begin{equation} 
\Lcal_a=\frac{1}{n}\sum_{i=1}^n \ell(f_{\theta}(\hx_i ),y_{i})+\sum_{i=1}^3 G_i+\bE_{\lambda\sim\tilde\cD_\lambda}[(1-\lambda)^2\varphi'(1-\lambda)],
\end{equation}
and 
\begin{equation} 
\Lcal_c=\frac{1}{n}\sum_{i=1}^n \ell(f_{\theta}(x_i ),y_{i})+\sum_{i=1}^3 R_i+\bE_{\lambda\sim\tilde\cD_\lambda}[(1-\lambda)^2\varphi''(1-\lambda)],
\end{equation}
where $\lim_{q\rightarrow 0}\varphi'(q)=0$ and  $\lim_{q\rightarrow 0}\varphi''(q)=0$. Thus, by defining
\begin{align}
Q(f)=\frac{1}{n}\EE_{S}\left[\sum_{i=1}^{n}\left(\max_{\delta_i:\|\delta_i\|_{\rho}\le \epsilon}\ell(f(x_i+\delta_i),y_i)-\ell(f(x_i),y_i)\right) \right],
\end{align}
and
\begin{align}
V(f)=\EE_{S}\left[\sum_{i=1}^3 \frac{G_i+R_i}{2}\right]-\bE_{\lambda\sim\tilde\cD_\lambda}[(1-\lambda)^2\varphi(1-\lambda)],
\end{align}
we have that
\begin{align}
&\EE_{S}[\varphi(S)] 
\\ &  = \EE_{S}\left[\sup_{f \in\Fcal} \EE_{S'}\left[\frac{1}{n}\sum_{i=1}^{n}\ell(f_{}(x_i'),y_i')\right]-\frac{\Lcal_c +\Lcal_a}{2} \right] 
\\ &  = \EE_{S}\left[\sup_{f \in\Fcal} \EE_{S'}\left[\frac{1}{n}\sum_{i=1}^{n}\ell(f_{}(x_i'),y_i')\right]-\frac{1}{n}\sum_{i=1}^{n}\ell(f(x_i),y_i) \right] - \frac{Q(f)}{2}-V(f)   
 \\ &  \le\EE_{S,S'}\left[\sup_{f \in\Fcal} \frac{1}{n}\sum_{i=1}^n (\ell(f_{}(x'_i),y'_i)-\ell(f(x_i),y_i)\right]  - \frac{Q(f)}{2}-V(f)
 \\ & \le \EE_{\xi, S, S'}\left[\sup_{f\in\Fcal} \frac{1}{n}\sum_{i=1}^n  \xi_i(\ell(f_{}(x'_{i}),y'_{i})-\ell(f(x_i),y_i))\right] - \frac{Q(f)}{2}-V(f)
 \\  &   \le2\EE_{\xi, S}\left[\sup_{f\in\Fcal} \frac{1}{n}\sum_{i=1}^n  \xi_i\ell(f(x_i),y_i))\right] - \frac{Q(f)}{2}-V(f)
 \\ \label{eq:13} & =2\Rcal_{n}(\ell \circ \Fcal)- \frac{Q(f)}{2}-V(f)
\end{align}
where  the second line follows the definitions of each term, the third line uses $\pm \frac{1}{n}\sum_{i=1}^{n}\ell(\allowbreak f(x_i),y_i)$ inside the expectation and the linearity of expectation, the forth line uses the Jensen's inequality and the convexity of  the 
supremum, and the fifth line follows that for each $\xi_i \in \{-1,+1\}$, the distribution of each term $\xi_i (\ell(f_{}(x'_i),y'_i)-\ell(f(x_i),y_i))$ is the  distribution of  $(\ell(f_{}(x'_i),y'_i)-\ell(f(x_i),y_i))$  since $S$ and $S'$ are drawn iid with the same distribution. The sixth line uses the subadditivity of supremum. 

Finally, by noticing that $Q(f)\ge 0$ from the definition of $Q(f)\ge 0$ (since $\max_{\delta_i:\|\delta_i\|_{\rho}\le \epsilon}\allowbreak \ell(f(x_i+\delta_i),y_i)-\ell(f(x_i),y_i)\ge 0$) and by combining equations \eqref{eq:12} and \eqref{eq:13}, we have the desired statement.

\qed

\noindent \textbf{Proof of Theorem \ref{thm:6}}.  Let $S=((x_i, y_i))_{i=1}^n$ and $S'=((x_i', y_i'))_{i=1}^n$. Define 
\begin{align}
\varphi(S)= \sup_{f \in\Fcal} \EE_{x,y}[\ell(f(x),y)]-\frac{\Lcal_c '+\Lcal_a'}{2}.
\end{align} 
To apply McDiarmid's inequality to $\varphi(S)$, we compute an upper bound on $|\varphi(S)-\varphi(S')|$ where  $S$ and $S'$ be two test datasets differing by exactly one point of an arbitrary index $i_{0}$; i.e.,  $S_i= S'_i$ for all $i\neq i_{0}$ and $S_{i_{0}} \neq S'_{i_{0}}$. Then,
\begin{align}
\varphi(S')-\varphi(S) \le \frac{2\tau}{n},
\end{align}
since $\sup_{f \in\Fcal}\frac{\max_{\delta_{i_0}:\|\delta_{i_0}\|_{\rho}\le \epsilon}\ell(f(x_{i_0}+\delta_{i_0}),y_{i_0})-\max_{\delta_{i_0}:\|\delta_{i_0}\|_{\rho}\le \epsilon}\ell(f_{}(x'_{i_0}+\delta_{i_0}),y'_{i_0})}{2n} \le\frac{\tau}{n} $. Similarly, $\varphi(S)-\varphi(S')\le \frac{2\tau}{n}$. Thus, by McDiarmid's inequality, for any $\delta>0$, with probability at least $1-\delta$,
\begin{align} \label{eq:2}
\varphi(S) \le  \EE_{S}[\varphi(S)] +2\tau \sqrt{\frac{\ln(1/\delta)}{2n}}.
\end{align}
Moreover, by defining
\begin{align}
Q(f)=\frac{1}{n}\EE_{S}\left[\sum_{i=1}^{n}\left(\max_{\delta_i:\|\delta_i\|_{\rho}\le \epsilon}\ell(f(x_i+\delta_i),y_i)-\ell(f(x_i),y_i)\right) \right],
\end{align}
we have that
\begin{align}
&\EE_{S}[\varphi(S)] 
\\ &  = \EE_{S}\left[\sup_{f \in\Fcal} \EE_{S'}\left[\frac{1}{n}\sum_{i=1}^{n}\ell(f_{}(x_i'),y_i')\right]-\frac{\Lcal_c '+\Lcal_a'}{2} \right] 
\\ &  = \EE_{S}\left[\sup_{f \in\Fcal} \EE_{S'}\left[\frac{1}{n}\sum_{i=1}^{n}\ell(f_{}(x_i'),y_i')\right]-\frac{1}{n}\sum_{i=1}^{n}\ell(f(x_i),y_i) \right] - \frac{Q(f)}{2}   
 \\ &  \le\EE_{S,S'}\left[\sup_{f \in\Fcal} \frac{1}{n}\sum_{i=1}^n (\ell(f_{}(x'_i),y'_i)-\ell(f(x_i),y_i)\right]  -  \frac{Q(f)}{2}
 \\ & \le \EE_{\xi, S, S'}\left[\sup_{f\in\Fcal} \frac{1}{n}\sum_{i=1}^n  \xi_i(\ell(f_{}(x'_{i}),y'_{i})-\ell(f(x_i),y_i))\right] - \frac{Q(f)}{2}
 \\ \label{eq:3} &   \le2\EE_{\xi, S}\left[\sup_{f\in\Fcal} \frac{1}{n}\sum_{i=1}^n  \xi_i\ell(f(x_i),y_i))\right] - \frac{Q(f)}{2}=2\Rcal_{n}(\ell \circ \Fcal)- \frac{Q(f)}{2}
\end{align}
where  the second line follows the definitions of each term, the third line uses $\pm \frac{1}{n}\sum_{i=1}^{n}\ell(\allowbreak f(x_i),y_i)$ inside the expectation and the linearity of expectation, the forth line uses the Jensen's inequality and the convexity of  the 
supremum, and the fifth line follows that for each $\xi_i \in \{-1,+1\}$, the distribution of each term $\xi_i (\ell(f_{}(x'_i),y'_i)-\ell(f(x_i),y_i))$ is the  distribution of  $(\ell(f_{}(x'_i),y'_i)-\ell(f(x_i),y_i))$  since $S$ and $S'$ are drawn iid with the same distribution. The sixth line uses the subadditivity of supremum. 

Finally, by noticing that $Q(f)\ge 0$ from the definition of $Q(f)\ge 0$ (since $\max_{\delta_i:\|\delta_i\|_{\rho}\le \epsilon}\allowbreak \ell(f(x_i+\delta_i),y_i)-\ell(f(x_i),y_i)\ge 0$) and by combining equations \eqref{eq:2} and \eqref{eq:3}, we have the desired statement.

\qed

\noindent \textbf{Proof of Theorem \ref{thm:5}}. From the assumption, we have $f_{\theta}(\hx_i )=\nabla f_\theta(\hx_i )\T \hx_i $ and $\nabla^2 f_\theta(\hx_i )=0$. Since $h(z)=\log(1+e^z)$, we have $h'(z)=\frac{e^z}{1+e^z}=g(z) \ge 0$ and $h''(z)=\frac{e^z}{(1+e^z)^{2}}=g(z)(1-g(z)) \ge 0$.
By substituting these into the equation of Lemma  \ref{lemma:1},
\begin{align}
\Lcal_a= \frac{1}{n}\sum_{i=1}^n \ell(f_{\theta}(\hx_i ),y_{i})+ G_{1}+ G_{2} +\EE_\lambda [(1-\lambda )^2 \varphi_{}(1-\lambda)],
\end{align} 
where 
$$
G_{1}= \scalebox{0.95}{$\displaystyle  \frac{\EE_\lambda [(1-\lambda)]}{n}\sum_{i=1}^n (y_i-g(f_\theta(\hx_i ))) \|\bE_{r\sim\cD_\hx}[r- \hx_i] \|_2 \cos(\nabla f_\theta(\hx_i),\bE_{r\sim\cD_\hx}[r- \hx_i]) \|\nabla f_\theta(\hx_i)\|_2, $} 
$$
\begin{align*}
G_{2}= \frac{\EE_\lambda [(1-\lambda)^{2}]}{2n} \sum_{i=1}^n |g(f_\theta(\hx_i ))(1-g(f_\theta(\hx_i )))| \|\nabla f_\theta(\hx_i )\|^2_{\EE_{r}[(r-\hx_i )(r-\hx_i )\T]}.
\end{align*}
\qed

\noindent \textbf{Proof of Proposition \ref{prop:1}}.
Since $\theta \in \Theta'$, we have  $y_i (f_{\theta}(\hx_i )-\zeta_{i} )+(y_i-1)(f_{\theta}(\hx_i )-\zeta_{i})\ge 0$, which implies that $f_{\theta}(\hx_i )-\zeta_{i}\ge 0$ if $y_i =1$ and $f_{\theta}(\hx_i )-\zeta_{i}\le 0$ if $y_i=0$. Thus, if $y_i =1$,
$$
(y_i-g(f_{\theta}(\hx_i )))(f_{\theta}(\hx_i )-\zeta_{i})=(1-g(f_{\theta}(\hx_i )))( f_{\theta}(\hx_i )-\zeta_{i}) \ge 0,
$$
since $(f_{\theta}(\hx_i )-\zeta_{i}) \ge 0$ and $(1-g(f_{\theta}(\hx_i ))) \ge 0$ due to  $g(f_{\theta}(\hx_i )) \in (0,1)$.  If $y_i =0$,
$$
(y_i-g(f_{\theta}(\hx_i )))(f_{\theta}(\hx_i )-\zeta_{i})=-g(f_{\theta}(\hx_i ))(f_{\theta}(\hx_i )-\zeta_{i}) \ge 0,
$$
since $(f_{\theta}(\hx_i )-\zeta_{i})\le 0$ and $-g(f_{\theta}(\hx_i )) <0$. Therefore, for all $i=1,\dots,n$, 
$$
(y_i-g(f_{\theta}(\hx_i )))(f_{\theta}(\hx_i )-\zeta_{i}) \ge 0.
$$  
This implies that, since $\EE_\lambda [(1-\lambda)]\ge 0$ and  $f_{\theta}(\hx_i )=\nabla f_\theta(\hx_i )\T \hx_i$,
\begin{align*}
G_{1}&= \frac{\EE_\lambda [(1-\lambda)]}{n}\sum_{i=1}^n (y_i-g(f_\theta(\hx_i )))(f_\theta(\hx_i )-\zeta_{i}) \ge 0.  
\end{align*}
Thus, we have that  $0 \le G_{1}=C_{1}\|\nabla f_\theta(\hx_i)\|_2$ where $\|\nabla f_\theta(\hx_i)\|_2 \ge 0$, which implies that $C_{1}\ge 0$.

\qed


\section*{References}

\bibliography{mybib}

\end{document}